\documentclass{article}
\usepackage{subcaption}
\usepackage{multirow}
\usepackage{algorithm}

 \usepackage[main, final]{neurips_2026}
\usepackage[utf8]{inputenc} % allow utf-8 input
\usepackage[T1]{fontenc}    % use 8-bit T1 fonts
\usepackage{hyperref}       % hyperlinks
\usepackage{url}            % simple URL typesetting
\usepackage{booktabs}       % professional-quality tables
\usepackage{amsfonts}       % blackboard math symbols
\usepackage{nicefrac}       % compact symbols for 1/2, etc.
\usepackage{microtype}      % microtypography
\usepackage{xcolor}         % colors

\usepackage{hyperref}
\usepackage{algpseudocode}

\usepackage{amsmath}
\usepackage{amssymb}
\usepackage{mathtools}
\usepackage{amsthm}

\usepackage[capitalize,noabbrev]{cleveref}

\theoremstyle{plain}
\newtheorem{theorem}{Theorem}[section]
\newtheorem{proposition}[theorem]{Proposition}

\theoremstyle{definition}

\theoremstyle{remark}

\title{METRO: Metric-Enhanced Token Routing Operator}

\author{%
  Nodens Koren \\
  Department of Computer Science\\
  ETH Zürich\\
  Zürich, Switzerland \\
  \texttt{nodens.koren@inf.ethz.ch} \\
  \And
  Thomas Hofmann \\
  Department of Computer Science\\
  ETH Zürich\\
  Zürich, Switzerland \\
  \texttt{thomas.hofmann@inf.ethz.ch} \\
  \AND
  Georgios Kissas \\
  Swiss Data Science Center \\
  ETH Zürich \\
  Zürich, Switzerland \\
  \texttt{georgios.kissas@sdsc.ethz.ch} \\
}

\begin{document}

\maketitle

\begin{abstract}
State-of-the-art neural operators scale to complex meshes via \textit{slice-and-process} architectures, yet many rely on linear compatibility scores for latent tokenization. Under common feature normalization, such scores are equivalent to isotropic Euclidean clustering, while without normalization they induce unbounded linear decision regions. In both cases, they lack slice-specific anisotropic locality, which can lead to redundant and entangled latent slices. To address this, we propose \textbf{Metric-Enhanced Token Routing Operator (METRO)}, a geometry-aware routing mechanism that replaces linear projection with a learnable Mahalanobis metric. By enabling each latent slice to learn a local anisotropic tensor, METRO shapes receptive fields into exponentially localized, oriented ellipsoids that naturally align with flow features like boundary layers and wakes. As a drop-in replacement, METRO yields consistent improvements across both Transformer and Mamba backbones. Empirically, our method achieves substantial performance gains on irregular domains, outperforming baselines on both standard PDE benchmarks and complex industrial design tasks. Finally, METRO exhibits enhanced robustness in out-of-distribution regimes across varying Reynolds numbers and geometric configurations. \looseness=-1
\end{abstract}

\section{Introduction}

Industrial design workflows in sectors such as aerospace, automotive, energy, and turbomachinery rely on solving Partial Differential Equations (PDEs) repeatedly for varying system configurations. While traditional solvers offer high accuracy, they are computationally  slow, making them impractical for rapid prototyping, uncertainty quantification, and real-time control \citep{lu2021learning, lifourier}. In light of these bottlenecks, deep learning algorithms (surrogate models) have recently been explored to predict the behavior of physical systems, with a specific focus on neural operators that learn the solution operator of PDEs directly. 

Neural operators \citep{kovachki2023neural, li2023fourier, azizzadenesheli2024neural, kissas2022learning, seidman2022nomad, lingsch2024fuse, lu2021learning} have evolved from solving proof-of-concept problems to serving as accurate, scalable surrogates increasingly adopted in complex engineering workflows, from climate modeling to structural design~\citep{bonev2023spherical, wu2024transolver, azizzadenesheli2024neural, tiwarilatent}. To achieve broader adoption in large-scale industrial applications, such as aerodynamic shape optimization, structural analysis, and turbomachinery, they need to: (i) process complex geometries without relying on naive interpolation to regular grids, which disregards the underlying geometric structure and the physical nature of the problem; (ii) scale to a massive number of degrees of freedom to accommodate the high-resolution discretizations required for industrial fidelity; (iii) capture long-range interactions, as complex physical phenomena like wakes often span the entire domain; and (iv) capture anisotropic information propagation, where features such as boundary layers and shocks are thin in one direction yet distinctively elongated in another. 

A common strategy to address these challenges is to \emph{compress} the input field to a much smaller set of latent tokens, perform global mixing in this latent space, and \emph{decode} back to the original resolution. Existing methods differ mainly in how the mesh-to-latent coupling is defined (e.g., local neighborhood integration \citep{wen2025geometry, li2023geometry} versus global soft routing \citep{hu2025transolver, luotransolver++}). One prominent approach in this compress-process-decode family is the so-called ``slice-and-process'' method.  Despite their success, current slice-based operators rely on simplified geometric assumptions that limit their expressivity. For instance, the original Transolver employs dot-product attention, which defines linear decision boundaries. As we derive in Section~\ref{sec:isotropy}, under standard feature normalization, dot-product routing is equivalent, up to constants, to nearest-prototype assignment under an isotropic Euclidean distance: it clusters points based on Euclidean proximity to a prototype, effectively favoring spherical neighborhoods in the learned feature space. While subsequent works~\citep{luotransolver++} have improved computational efficiency, this limitation in anisotropic local tokenization has yet to be addressed. Consequently, there remains a critical gap for a mechanism that is \textit{locally adaptive}, allowing different latent slices to learn distinct anisotropic metric tensors in the learned feature space.

Motivated by these insights, we propose \textbf{Metric-Enhanced Token Routing Operator (METRO)}, a principled generalization of the latent-attention paradigm. METRO endows neural operators with \textit{learnable anisotropic metrics} that shape each attention head’s receptive field into an exponentially localized ellipsoid, rather than an unbounded hyperplane or a rigid sphere. Specifically, we replace the standard isotropic dot-product in the attention routing with a Mahalanobis distance tailored to the underlying data. This yields \textit{geometry-aware attention}: each latent slice learns its own local metric tensor, allowing it to attend to points along specific orientations while suppressing off-axis or outlier features. The metric is learned end-to-end, providing the model with the flexibility to discover the optimal anisotropy for each physical region.

In summary, our key contributions are:
\begin{itemize}
    \item Task-Specialized Local Routing: We introduce a Mahalanobis-based routing mechanism where each latent slice learns its own distinct anisotropic metric. Unlike global feature reweighting approaches, this enables the model to create a structured latent partition where different slices can independently specialize to capture varying physical modes—such as highly anisotropic boundary layers or isotropic diffusion—within the same system.
    \item High-Efficiency Anisotropic Tokenization: By effectively shaping the receptive fields to match complex geometries, our method captures physical phenomena using significantly fewer latent slices. Under strict parameter-matched conditions, this compact representation yields both lower error rates and faster training times compared to isotropic baselines.
    \item Model-Agnostic Framework: Our approach is a modular upgrade compatible with both Transformer and state-space model (SSM) backbones, providing richer geometric awareness without sacrificing linear scalability. \looseness=-1
\end{itemize}

Beyond these technical contributions, METRO aims to broaden the applicability of latent operator networks. Our empirical results highlight substantial accuracy gains across diverse PDE systems, a nearly $50\%$ reduction in volumetric prediction error on the AirfRANS \citep{bonnet2022airfrans} benchmark, and superior generalization in aerodynamic design tasks.

\section{Literature review}

\textbf{Neural operators and PDE solvers.} 
Surrogate models for physics simulations on irregular domains can be broadly categorized based on their strategy for handling geometric discretization. \textit{Transformation-based approaches}, including Geo-FNO~\citep{li2023fourier}, Diffeomorphic Neural Operators~\citep{ahmad2024diffeomorphic}, Optimal Transport Neural Operators~\citep{li2025geometric}, and DIMON~\citep{yin2024dimon}, map non-uniform meshes into regular latent grids to leverage scalable backbones such as FNO \citep{lifourier}. However, these methods rely on diffeomorphisms that are often too topologically restrictive for complex geometries, or on optimal transport maps that may yield non-smooth, non-invertible coordinates across varying instances. \textit{Graph-based methods}, such as GNO~\citep{li2020neural}, GINO~\citep{li2023geometry}, and MeshGraphNet~\citep{pfaff2020learning}, instead utilize graph embeddings to process irregular meshes directly or map them to global representations. Despite their utility, these approaches typically incur high computational costs due to multiple message-passing rounds, the need for multi-scale receptive fields, and the hardware inefficiencies associated with unstructured memory access. \textit{Direct methods}, such as GNOT~\citep{hao2023gnot} and arbitrary-grid FNOs~\citep{lingsch2023beyond}, operate on the physical domain without intermediate mappings. While mathematically robust, operating on the full physical discretization often hinders efficient scaling to high-resolution three-dimensional problems. Finally, \textit{embedding approaches}—such as the slicing mechanism in Transolver~\citep{wu2024transolver, luotransolver++, hu2025transolver} or continuous neural representations~\citep{serrano2023operator, yin2022continuous}—offer high flexibility and scalability by treating geometry as unstructured tokens. While they bypass rigid topological constraints, existing methods fail to explicitly preserve intrinsic geometric properties.

\begin{figure*}[t]
    \centering % This centers the whole block
    
    % --- LEFT PANEL: Flow Chart ---
    \begin{subfigure}[b]{0.3\textwidth}
        \centering
        \includegraphics[width=\linewidth, keepaspectratio]{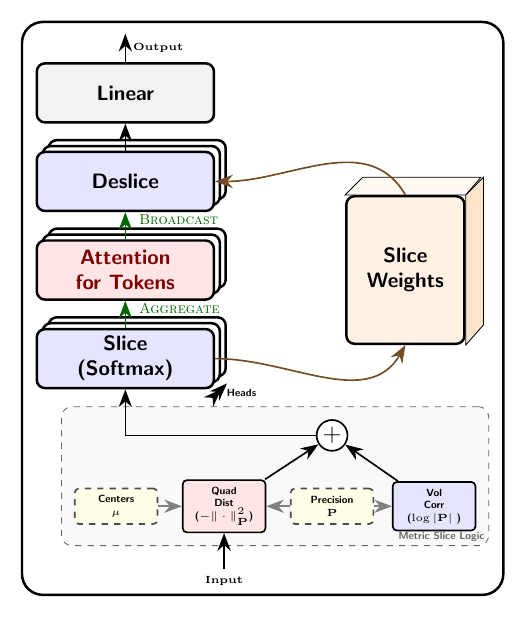}
        \caption{Pipeline}
        \label{fig:malo_flowchart}
    \end{subfigure}%
    \hspace{0.5cm} % Fixed gap
    % --- RIGHT PANEL: Visualization ---
    \begin{subfigure}[b]{0.5\textwidth}
        \centering
        \includegraphics[angle=270, width=\linewidth, keepaspectratio]{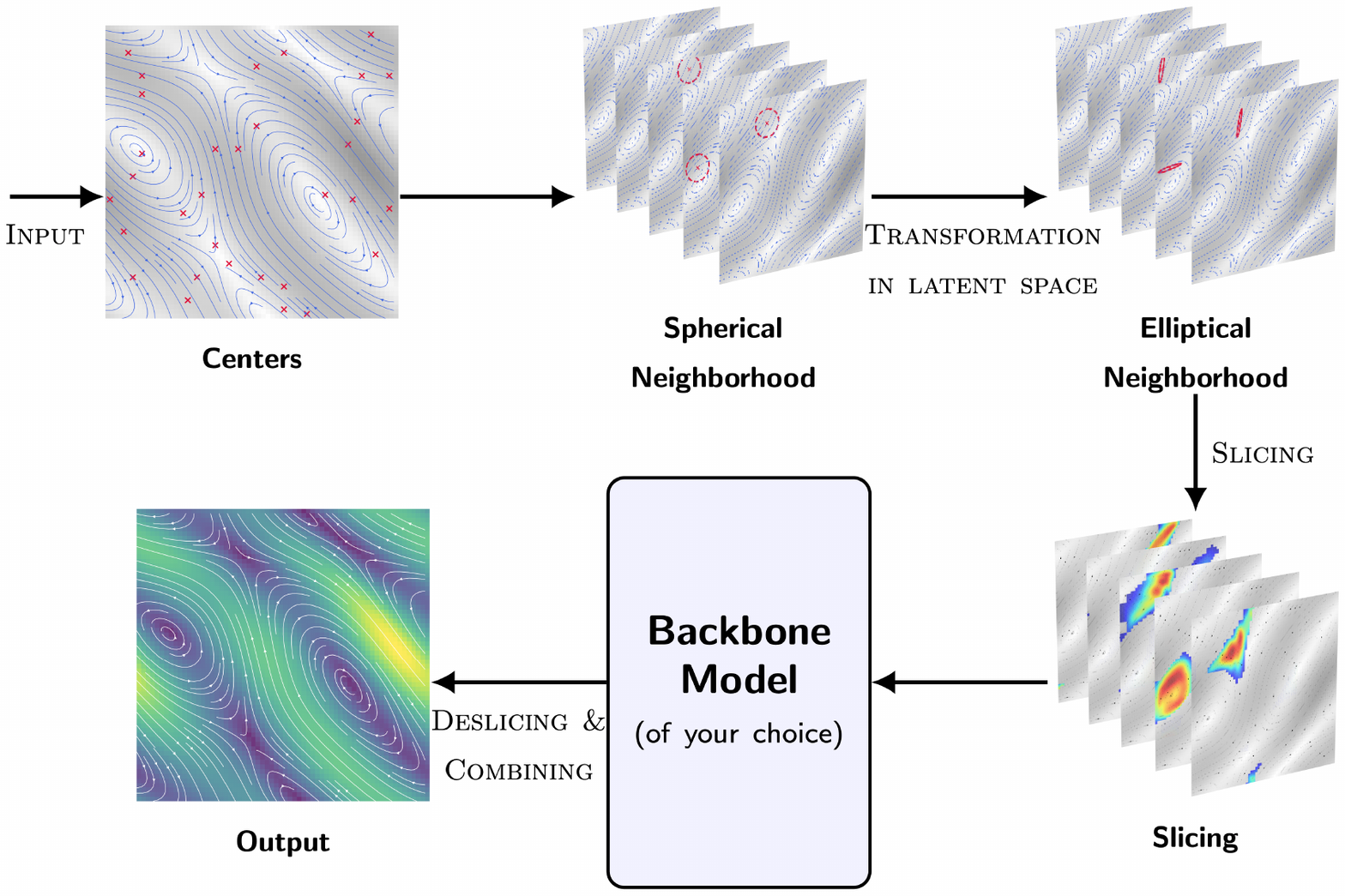}
        % UPDATED CAPTION HERE:
        \caption{Workflow Visualization}
        \label{fig:malo_viz}
    \end{subfigure}

    \caption{\textbf{Overview of the METRO Architecture.} 
    (\textbf{a}) The processing pipeline: Input features are mapped to centers, transformed by the metric, sliced, and processed.
    (\textbf{b}) Visual demonstration: Comparing standard spherical neighborhoods (Iso) to flow-aligned elliptical neighborhoods (Aniso), showing how the model explicitly attends to local physical structures.}
    \label{fig:main_system}
\end{figure*}

\section{Methodology}
\label{sec:method}
In this section, we formalize latent operator learning, analyze the geometric limitations of previously proposed approaches, and introduce our core contribution: \textit{Metric-Enhanced Token Routing Operator (METRO)}. More specifically, we demonstrate that linear routing in slicing operations induces an isotropic partition of the latent space under the standard normalization assumptions. To address this, we propose a learnable anisotropic metric. We define the resulting geometry by interpreting the learned latent metric as inducing a pullback pseudometric on the physical domain, a formulation that simplifies computation.

\subsection{Problem Setup and Notation}

Neural operators learn an approximate map between infinite-dimensional  function spaces $\mathcal{A} \to \mathcal{U}$. We consider a physical domain $\Omega$ discretized by a (non-)uniform mesh with $N$ points, and an input embedding $\mathbf {x}_n \in \mathbb{R}^D$ at point $n$, obtained from an encoder $\phi: \Omega \times \mathbb{R}^{d_{in}} \rightarrow \mathcal{Z} \subset \mathbb{R}^D$ acting on an input field and geometry. Crucially, all routing operations are performed in this learned feature space rather than directly in physical coordinates. The ``Slice-and-Process'' paradigm \citep{wu2024transolver} compresses these $N$ points into $G$ latent tokens (slices), where $G \ll N$. These tokens are processed via a neural backbone and then projected back to the original resolution:
\begin{align*}
    &\text{Slicing} \qquad \mathbf{z}_g = \frac{\sum_{n=1}^NA_{n,g} \mathbf{x}_n }{\sum_{n=1}^N A_{n,g}} \in \mathbb{R}^D, \quad g = 1, ..., G \\
    &\text{Process} \qquad (\mathbf{z}'_1, ..., \mathbf{z}'_G) = \mathcal{G} (\mathbf{z}_1, ..., \mathbf{z}_G), \\
    &\text{de-Slicing} \qquad \mathbf{x}'_n = \sum_{g=1}^G A_{n,g} \mathbf{z}'_g \in \mathbb{R}^D, \quad n = 1,...,N. 
\end{align*}
Here $A_{n,g} \ge 0$ represents the routing weights, satisfying $\sum_g A_{n,g} = 1$ for each $n$, and $\mathcal{G}$ denotes an arbitrary backbone, such as Transformer \citep{hu2025transolver, luotransolver++} or Mamba \citep{tiwarilatent}. METRO modifies the routing mechanism as detailed in Section \ref{sec:malo_slicing}. 

\subsection{Revisiting Linear Slicing: The Isotropy Equivalence}
\label{sec:isotropy}
\citet{wu2024transolver} proposed a slicing mechanism where routing weights are derived from a scalar compatibility score $S_{n,g}$ via a Softmax operator:
\begin{equation}
    \label{eq:generic_routing}
    A_{n,g} = \text{Softmax}_g \left( S_{n,g} \right),
\end{equation}
In the original Transolver, this score is a dot product $S_{n,g} = \frac{\mathbf{x}_n^\top \mathbf{w}_g}{\tau}$, where $\tau > 0$ is the temperature, utilizing learnable prototypes $\mathbf{w}_g \in \mathbb{R}^D$. This scoring induces linear pairwise assignment boundaries. Under the normalized/equal-norm interpretation below, these boundaries correspond to an isotropic Euclidean Voronoi partition.

Under standard feature normalization—such as LayerNorm \citep{ba2016layer} or an explicit $\ell_2$ normalization—maximizing this dot-product is equivalent, up to a constant, to minimizing the isotropic Euclidean distance: $\|\mathbf{x} - \mathbf{w}\|_2^2 = \|\mathbf{x}\|_2^2 + \|\mathbf{w}\|_2^2 - 2 \mathbf{x}^\top \mathbf{w}$. Given the normalization regime where $\| \mathbf{x}  \|_2^2 \approx \text{const}$ and $\| \mathbf{w}  \|_2^2 \approx \text{const}$, using the score $\mathbf{x}^\top \mathbf{w}$ is functionally equivalent to using the distance $ \| \mathbf{x} - \mathbf{w} \|_2^2$.

This reveals two distinct views of the original routing mechanism. Without feature normalization, the linear dot-product score induces linear decision boundaries (i.e., unbounded hyperplanes). Under standard normalization, maximizing the dot-product becomes equivalent to minimizing an isotropic Euclidean distance, leading to spherical clustering. What is common to both views is that linear routing does not induce a localized, anisotropic interior region. In contrast, the proposed Mahalanobis score defines ellipsoidal level sets with exponential locality. Since information flow in physical systems is often highly preferential—such as in boundary layers or wakes—this motivates the need for an anisotropic routing geometry.

\subsection{Geometry-Aware Anisotropic Slicing}
\label{sec:malo_slicing}

To induce anisotropic similarity within the slice-process-deslice paradigm, we replace the dot-product score with a log-Gaussian density score. Each slice $g$ is parameterized by a center $\boldsymbol{\mu}_g \in \mathbb{R}^D$ and a positive-definite precision matrix $\mathbf{P}_g \in \mathbb{R}^{D \times D}$. We define the geometry-aware score as:
\begin{equation}
    S_{n,g} = -\frac{1}{2} \|\mathbf{x}_n - \boldsymbol{\mu}_g\|^2_{\mathbf{P}_g} + \frac{1}{2} \log \det(\mathbf{P}_g),
\end{equation}
where $\|\mathbf{x}\|^2_\mathbf{P} = \mathbf{x}^\top \mathbf{P}\mathbf{x}$ denotes the squared Mahalanobis norm. Substituting this into Eq.~\ref{eq:generic_routing} yields weights proportional to (up to a per-$n$ normalization):
\begin{equation}
    A_{n,g} \propto \sqrt{\det(\mathbf{P}_g)} \cdot \exp\left( -\frac{1}{2} \|\mathbf{x}_n - \boldsymbol{\mu}_g\|^2_{\mathbf{P}_g} \right).
\end{equation}
For a fixed $g$, the quadratic form $(\mathbf{x}_n - \boldsymbol{\mu}_g)^\top \mathbf{P}_g (\mathbf{x}_n - \boldsymbol{\mu}_g)$ defines ellipsoidal level sets determined by the eigenspectrum of $\mathbf{P}_g$. This allows the router to \emph{stretch and compress} the receptive field around a point in an explicitly learned, anisotropic manner. The ellipsoids are defined over the latent coordinates, and the encoder decides how these correspond to coordinates in the physical space. While $\mathbf{P}_g$ may be full rank, we adopt a diagonal parameterization:
\begin{equation}
    \mathbf{P}_g = \text{diag} (p_{g,1}, \dots, p_{g, D}), \quad p_{g,i} > 0.
\end{equation}
We ensure positive definiteness by parameterizing $p_{g,i} = \text{softplus}(\hat{p}_{g,i}) + \varepsilon$, where $\varepsilon > 0$. While this choice implicitly forces the ellipsoids to be axis-oriented in the latent coordinates, we show next that these axis-aligned ellipsoids induce receptive fields in the physical coordinates through the pullback that are oriented via the encoder map. 

The term $\frac{1}{2}\log\det(\mathbf{P}_g)$ acts as the Gaussian normalization constant; algebraically, it balances the softmax logits to ensure that slices with higher variance (lower precision) do not artificially dominate the routing weights.

\subsection{Theoretical Connections: Pullback Pseudometrics and Oriented Receptive Fields}

METRO can be interpreted as learning localized kernels in the feature space. For each slice \(g\), the latent Mahalanobis geometry induces a pullback pseudometric on the physical coordinate space through the encoder \(\phi\). We define the pullback bilinear form
\begin{equation}
    \label{eq:pullback}
    g_{\mathrm{pb}}^{(g)}(\mathbf{u})(\mathbf{v},\mathbf{w})
    =
    \left\langle
    J_\phi(\mathbf{u})\mathbf{v},
    \mathbf{P}_g J_\phi(\mathbf{u})\mathbf{w}
    \right\rangle ,
\end{equation}
where \(J_\phi(\mathbf{u})\) is the Jacobian of \(\phi\), \(\mathbf{P}_g\) is the slice precision matrix, and \(\langle\cdot,\cdot\rangle\) is the Euclidean inner product in \(\mathbb{R}^D\). This form is always positive semidefinite, and becomes positive definite when \(J_\phi(\mathbf{u})\) has full column rank. Thus, even when \(\mathbf{P}_g\) is diagonal in latent coordinates, the induced local geometry in physical coordinates can be non-diagonal through the encoder Jacobian, producing oriented receptive fields. We formalize this local statement below.

\begin{proposition}[Local orientation via pullback pseudometric]
\label{prop:orientation of pullback}
Let \(\phi:\mathbb{R}^m\to\mathbb{R}^D\) be twice differentiable, and let \(\mathbf{P}\in\mathbb{R}^{D\times D}\) be a diagonal matrix with strictly positive diagonal entries. Define
\begin{equation}
    q(\mathbf{u})
    =
    \mathbf{r}(\mathbf{u})^\top \mathbf{P}\mathbf{r}(\mathbf{u}),
    \qquad
    \mathbf{r}(\mathbf{u})=\phi(\mathbf{u})-\boldsymbol{\mu}.
\end{equation}
Then, for any \(\mathbf{u}_0\in\mathbb{R}^m\), the Hessian of \(q\) evaluated at \(\mathbf{u}_0\) satisfies
\begin{equation}
    \label{eq:hessian}
    \nabla^2 q(\mathbf{u}_0)
    =
    2J_\phi(\mathbf{u}_0)^\top \mathbf{P}J_\phi(\mathbf{u}_0)
    +
    2\sum_{k=1}^D
    P_{kk}r_k(\mathbf{u}_0)\nabla^2\phi_k(\mathbf{u}_0).
\end{equation}
In particular, if \(\phi(\mathbf{u}_0)=\boldsymbol{\mu}\), i.e. \(\mathbf{r}(\mathbf{u}_0)=\mathbf{0}\), then
\begin{equation}
    \nabla^2 q(\mathbf{u}_0)
    =
    2J_\phi(\mathbf{u}_0)^\top \mathbf{P}J_\phi(\mathbf{u}_0).
\end{equation}
\end{proposition}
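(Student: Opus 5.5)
The plan is a direct computation: expand $q$ coordinatewise using that $\mathbf{P}$ is diagonal, then differentiate twice with the product rule. Writing $P_{kk}$ for the diagonal entries, we have
\begin{equation*}
q(\mathbf{u}) = \sum_{k=1}^D P_{kk}\, r_k(\mathbf{u})^2, \qquad r_k(\mathbf{u}) = \phi_k(\mathbf{u}) - \mu_k .
\end{equation*}
Since $\boldsymbol{\mu}$ is constant, $\nabla r_k = \nabla\phi_k$ and $\nabla^2 r_k = \nabla^2\phi_k$. Because $\phi$ is twice differentiable, each $r_k$ is too, so $q$ is twice differentiable and its Hessian exists at every $\mathbf{u}_0$.

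\textbf{First derivatives.} The chain rule gives
\begin{equation*}
\nabla q(\mathbf{u}) = 2\sum_{k} P_{kk}\, r_k(\mathbf{u})\,\nabla\phi_k(\mathbf{u}) = 2 J_\phi(\mathbf{u})^\top \mathbf{P}\,\mathbf{r}(\mathbf{u}).
\end{equation*}
Here we use that the $k$-th row of $J_\phi$ is $\nabla\phi_k^\top$.

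\textbf{Second derivatives.} Differentiating each summand $2P_{kk}\,r_k\nabla\phi_k$ once more by the product rule gives two terms:
\begin{equation*}
2P_{kk}\,\nabla\phi_k\nabla\phi_k^\top \quad\text{and}\quad 2P_{kk}\,r_k\,\nabla^2\phi_k .
\end{equation*}
Summing over $k$, the first group becomes $2\sum_k P_{kk}\nabla\phi_k\nabla\phi_k^\top$. This equals $2J_\phi^\top \mathbf{P} J_\phi$: the $(i,j)$ entry of $J_\phi^\top \mathbf{P} J_\phi$ is $\sum_k \partial_i\phi_k\, P_{kk}\, \partial_j\phi_k$. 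The second group is exactly the curvature term in \eqref{eq:hessian}. Evaluating at $\mathbf{u}_0$ yields the general formula.

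\textbf{Special case.} If $\phi(\mathbf{u}_0)=\boldsymbol{\mu}$, then every $r_k(\mathbf{u}_0)=0$, so the curvature sum vanishes and only $2J_\phi(\mathbf{u}_0)^\top \mathbf{P} J_\phi(\mathbf{u}_0)$ remains.

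\textbf{Main obstacle.} There is no real obstacle; the only care needed is bookkeeping. First, the Jacobian convention (rows indexed by output components) must be used consistently so the Gauss--Newton term comes out as $J^\top \mathbf{P} J$ rather than $J \mathbf{P} J^\top$. Second, the Hessian must be symmetric, which holds automatically: $\nabla^2\phi_k(\mathbf{u}_0)$ is symmetric for a twice differentiable function, and $\nabla\phi_k\nabla\phi_k^\top$ is symmetric by construction. As a side remark, since each $P_{kk}>0$, the leading term is positive semidefinite. It is positive definite when $J_\phi(\mathbf{u}_0)$ has full column rank, matching the discussion of \eqref{eq:pullback}.
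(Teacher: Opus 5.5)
Your proof is correct and follows the paper's argument: you compute $\nabla q = 2J_\phi^\top \mathbf{P}\mathbf{r}$, differentiate again with the product rule to get the Gauss--Newton term $2J_\phi^\top\mathbf{P}J_\phi$ plus the curvature term, and then set $\mathbf{r}(\mathbf{u}_0)=\mathbf{0}$. The only cosmetic difference is that you expand $q$ coordinatewise using the diagonal structure from the start, whereas the paper's appendix version (for general $P\succ 0$) writes the curvature coefficient as $[P\mathbf{r}]_k$.
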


In Appendix~\ref{app:metric_routing_theory}, we provide a continuous-kernel interpretation of the Mahalanobis slicing score and its discrete approximation. Proposition~\ref{prop:orientation of pullback} shows that the local second-order geometry of the latent Mahalanobis distance is governed by the pullback form \(J_\phi^\top \mathbf{P}_g J_\phi\) at points mapped to the slice center. Away from the center, the additional Hessian term in Eq.~\eqref{eq:hessian} captures higher-order distortion from the nonlinear encoder. Therefore, axis-aligned ellipsoids in latent coordinates can induce locally oriented receptive fields in physical coordinates. We empirically measure the alignment between the principal axis of this induced local geometry and the local streamline direction in Appendix~\ref{sec:alignment}; the alignment increases systematically with anisotropy, supporting the interpretation that elongated slices adapt to physically meaningful directional features.

\begin{figure}[t]
    \centering
    \includegraphics[width=1.0\linewidth]{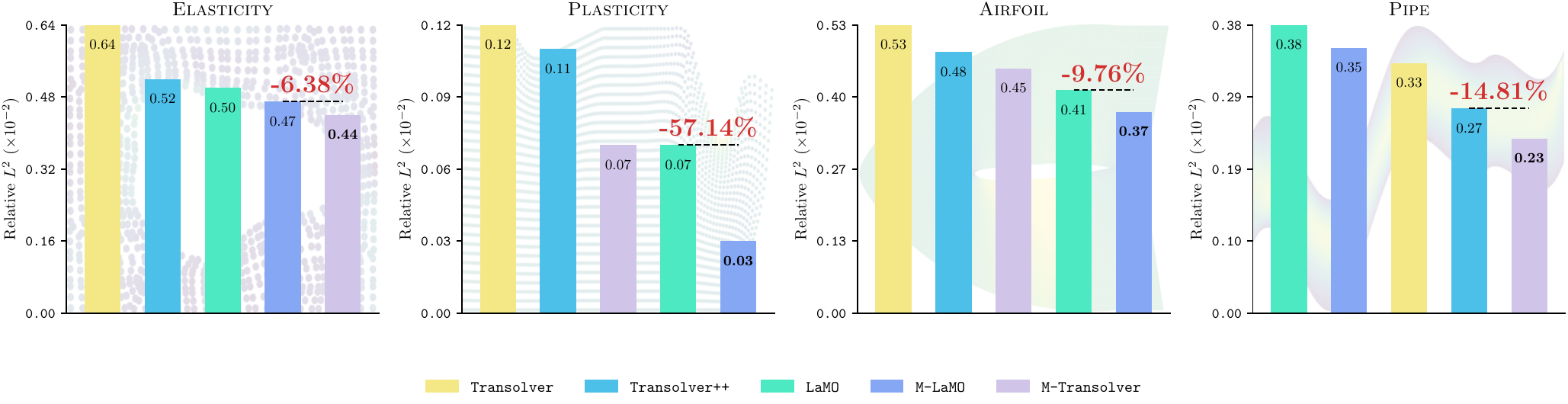}
    \caption{M-Transolver consistently outperforms baselines across complex geometries (Elasticity, Plasticity, Airfoil Pipe), achieving an average \textbf{23.4\% relative error reduction}.}
    \label{fig:four_domains}
\end{figure}

\section{Experiments}

To evaluate the proposed METRO framework, we conduct extensive experiments across diverse physical domains, ranging from standard PDE benchmarks to large-scale industrial design tasks. We demonstrate that integrating our geometry-aware anisotropic slicing into Transolver and LaMO backbones (denoted as \textbf{M-Transolver} and \textbf{M-LaMO}) consistently enhances performance across both structured and unstructured discretizations.

\begin{table*}[t]
\centering
\caption{Quantitative results on industrial benchmarks AirfRANS (unstructured RANS) and AirCraft (3D external aerodynamics). We report relative $L_2$ error ($\downarrow$ lower is better) and Spearman's rank correlation $\rho_L$ ($\uparrow$ higher is better).}
\label{tab:industrial_results}
\begin{sc}
\makebox[\textwidth][c]{% <--- Added this makebox
\resizebox{1.0\textwidth}{!}{%
\begin{tabular}{l|cccc|cccccc}
\toprule
\multicolumn{1}{c|}{\multirow{2}{*}{\textbf{Model}}} & \multicolumn{4}{c|}{\textbf{AirfRANS}} & \multicolumn{6}{c}{\textbf{AirCraft}} \\ \cmidrule(lr){2-5} \cmidrule(lr){6-11} 
\multicolumn{1}{c|}{} & Volume $\downarrow$ & Surf $\downarrow$ & \textbf{$C_L$} $\downarrow$ & \textbf{$\rho_L$} $\uparrow$ & \textbf{$C_p$} $\downarrow$ & \textbf{$\rho$} $\downarrow$ & \textbf{$U$} $\downarrow$ & \textbf{$V$} $\downarrow$ & \textbf{$W$} $\downarrow$ & Pressure $\downarrow$ \\ \midrule

GraphSAGE \citep{hamilton2017inductive} & 0.0087 & 0.0184 & 0.1476 & 0.9964 & 0.1442 & 0.0803 & 0.0148 & 0.2154 & 0.2395 & 0.0923 \\
PointNet \citep{qi2017pointnet} & 0.0253 & 0.0996 & 0.1973 & 0.9919 & 0.1472 & 0.0783 & 0.0147 & 0.2104 & 0.2528 & 0.0937 \\
Graph U-Net \citep{gao2019graph} & 0.0076 & 0.0144 & 0.1677 & 0.9949 & 0.1475 & 0.0784 & 0.0152 & 0.2225 & 0.2477 & 0.0947 \\
MeshGraphNet \citep{pfaff2020learning} & 0.0214 & 0.0387 & 0.2252 & 0.9945 & 0.1372 & 0.0682 & 0.0164 & 0.1835 & 0.2175 & 0.0893 \\
GNO \citep{li2020neural} & 0.0269 & 0.0405 & 0.2016 & 0.9938 & 0.1556 & 0.0785 & 0.0167 & 0.2012 & 0.2318 & 0.1071 \\
Galerkin \citep{cao2021choose} & 0.0074 & 0.0159 & 0.2336 & 0.9951 & 0.0990 & 0.0429 & 0.0110 & 0.1304 & 0.1077 & 0.0480 \\
Geo-FNO \citep{li2023fourier} & 0.0361 & 0.0301 & 0.6161 & 0.9257 & 0.2586 & 0.1227 & 0.0300 & 0.3014 & 0.2757 & 0.1432 \\
GNOT \citep{hao2023gnot} & 0.0049 & 0.0152 & 0.1992 & 0.9942 & 0.0824 & 0.0407 & 0.0092 & 0.1141 & 0.1164 & 0.0436 \\
GINO \citep{li2023geometry} & 0.0297 & 0.0482 & 0.1821 & 0.9958 & 0.1513 & 0.0636 & 0.0160 & 0.1501 & 0.1431 & 0.0853 \\
Transolver \citep{wu2024transolver} & 0.0037 & 0.0142 & 0.1030 & 0.9978 & \textit{0.0748} & \textit{0.0379} & \textit{0.0091} & \textit{0.1107} & \textit{0.1016} & 0.0439 \\
Transolver++ \citep{luotransolver++} & 0.0025 & 0.0084 & 0.1228 & 0.9976 & 0.0784 & 0.0390 & 0.0122 & 0.1318 & 0.1048 & 0.0445 \\
LaMO \citep{tiwarilatent} & \textit{0.0020} & \textit{0.0066} & \textit{0.0977} & \textit{0.9983} & 0.0786 & 0.0400 & 0.0099 & 0.1234 & 0.1094 & \textit{0.0438} \\ \midrule

\textbf{M-Transolver (Ours)} & 0.0014 & 0.0061 & 0.0528 & 0.9991 & \textbf{0.0692} & \textbf{0.0353} & \textbf{0.0060} & \textbf{0.0919} & \textbf{0.0899} & \textbf{0.0389} \\ 

\textbf{M-LaMO (Ours)} & \textbf{0.0010} & \textbf{0.0053} & \textbf{0.0495} & \textbf{0.9994} & 0.0734 & 0.0358 & 0.0075 & 0.1029 & 0.0953 & 0.0405 \\ 
\midrule

\textbf{Relative Promotion} & \textbf{+50.0\%} & \textbf{+19.7\%} & \textbf{+49.3\%} & / & \textbf{+7.5\%} & \textbf{+6.9\%} & \textbf{+34.1\%} & \textbf{+17.0\%} & \textbf{+11.5\%} & \textbf{+11.2\%} \\ \bottomrule
\end{tabular}%
}
} % <--- Closed the makebox
\end{sc}
\end{table*}

\subsection{Experimental Setup}

\textbf{Benchmarks.} We evaluate performance on two categories of datasets. \textbf{Standard PDE Benchmarks:} Following \citep{wu2024transolver, tiwarilatent}, we evaluate on six standard problems: \textit{Airfoil}, \textit{Darcy Flow}, \textit{Navier-Stokes (NS)}, \textit{Plasticity}, \textit{Elasticity}, and \textit{Pipe}. \textbf{Industrial Design Benchmarks:} We employ \textit{AirfRANS} \citep{bonnet2022airfrans}, \textit{AirCraft} \citep{luotransolver++}, and \textit{ShapeNet-Car} \citep{chang2015shapenet}. For AirfRANS and ShapeNet-Car, we additionally measure the Spearman’s rank correlation of lift ($C_L$) and drag ($C_D$) coefficients to assess design utility.
Full quantitative results for ShapeNet-Car and standard PDE benchmarks are provided in Appendix \ref{app:other_experiments}. Detailed dataset specifications are provided in Appendix \ref{sec:benchmarks}.

\textbf{Baselines.} We compare METRO against a comprehensive suite of more than ten baseline methods, including frequency-based operators (e.g., FNO \citep{lifourier}) and state-of-the-art Transformer and State-Space Model operators such as Transolver \citep{wu2024transolver}, Transolver++ \citep{luotransolver++}, and LaMO \citep{tiwarilatent}. For the complete enumeration and configuration of all baselines, we refer readers to Appendix \ref{sec:implementations}. 

% \vspace{-0.6em}

\subsection{Results on Standard Benchmarks}
\label{sec:standard_results}

We first verify the generality of our method by substituting linear slicing with Mahalanobis metrics across standard benchmarks. 
\textbf{Full quantitative results are reported in Table \ref{tab:main_results} (Appendix).} Two key trends emerge from this evaluation. \textit{Enhancement on Irregular Domains:} M-Transolver achieves consistent gains over the original Transolver baseline. Notably, on problems defined over irregular domains (Airfoil, Elasticity, Plasticity, Pipe), our method outperforms all baselines, achieving an average $23.4\%$ relative error reduction.
\textit{Backbone Agnosticism:} Improvements are observed for both M-Transolver and M-LaMO, indicating that anisotropic tokenization complements both Transformer and SSM architectures.

\subsection{Results on Industrial Benchmarks}
\label{sec:industrial_results}

The results on industrial benchmarks (Table \ref{tab:industrial_results}) demonstrate the distinct advantage of our method in handling complex geometries.

1. \textbf{Complex Wake Resolution via Anisotropy (Volume Pressure).} M-LaMO reduces the Volume Pressure error on AirfRANS by nearly 50\% compared to the best baseline. We attribute this improvement to the model's learnable anisotropy. In external aerodynamics, the volumetric domain is dominated by elongated, highly directional structures such as turbulent wakes. Isotropic spherical clusters struggle to cover these features efficiently, often capturing irrelevant free-stream background data. By allowing latent slices to stretch into flow-aligned ellipsoids, METRO efficiently tracks these directional wakes, significantly improving volumetric accuracy.

\textbf{Boundary Layer Isolation via Exponential Locality (Surface \& Lift).}
The Surface error and Lift Coefficient ($C_L$) serve as proxies for resolving the microscopically thin boundary layer. M-Transolver reduces Surface error by 32.2\% and $C_L$ error by 48.7\%. We hypothesize that this is driven by the exponential metric decay induced by the quadratic Mahalanobis distance. In high-dimensional latent spaces, isotropic routing can allocate diffuse neighborhoods that mix near-wall features with less relevant surrounding flow. The quadratic term introduces an exponential metric-locality bias, encouraging surface-focused attention heads to assign lower weight to metric-distant regions and helping preserve the high-gradient signals required for accurate lift prediction.

\textbf{3. Vector Field Sensitivity (AirCraft).}
On the 3D AirCraft dataset, we observe substantial gains in the vertical ($V$, $+17.0\%$) and spanwise ($W$, $+11.5\%$) velocity components. These fields often represent subtle, directional secondary flows (e.g., wingtip vortices). The learnable anisotropy appears to allow the model to align receptive fields with these specific flow directions, rather than averaging them out with a rotationally invariant dot product.

\textbf{4. Practical Utility for Design Optimization.}
Beyond reconstruction, the \textit{Spearman’s rank correlation} ($\rho_L$) for lift reaches $0.9994$ on AirfRANS. In industrial design loops, the ability to correctly \textit{rank} design candidates is often more valuable than absolute numerical precision. This suggests M-Transolver effectively captures the geometric sensitivities required for gradient-based shape optimization. \looseness=-1

\begin{table}[t]
\centering
\caption{Generalization performance on Out-of-Distribution (OOD) tasks for the AirfRANS dataset. We evaluate extrapolation capabilities on unseen Reynolds numbers and Angles of Attack. $\downarrow$ indicates lower is better; $\uparrow$ indicates higher is better (Spearman's rank correlation).}
\label{tab:ood_AirfRANS}
\begin{sc}
\resizebox{0.7\textwidth}{!}{%
\begin{tabular}{l|cc|cc}
\toprule
\multicolumn{1}{c|}{\multirow{2}{*}{\textbf{Model}}} & \multicolumn{2}{c|}{\textbf{OOD Reynolds}} & \multicolumn{2}{c}{\textbf{OOD Angles}} \\
\multicolumn{1}{c|}{} & \textbf{$C_L$} $\downarrow$ & \textbf{$\rho_L$} $\uparrow$ & \textbf{$C_L$} $\downarrow$ & \textbf{$\rho_L$} $\uparrow$ \\ \midrule

Simple MLP & 0.6205 & 0.9578 & 0.4128 & 0.9572 \\
GraphSAGE \citep{hamilton2017inductive} & 0.4333 & 0.9707 & 0.2538 & 0.9894 \\
PointNet \citep{qi2017pointnet} & 0.3836 & 0.9806 & 0.4425 & 0.9784 \\
Graph U-Net \citep{gao2019graph} & 0.4664 & 0.9645 & 0.3756 & 0.9816 \\
MeshGraphNet \citep{pfaff2020learning} & 1.7718 & 0.7631 & 0.6525 & 0.8927 \\ \midrule

GNO \citep{li2020neural} & 0.4408 & 0.9878 & 0.3038 & 0.9884 \\
Galerkin \citep{cao2021choose} & 0.4615 & 0.9826 & 0.3814 & 0.9821 \\
GNOT \citep{hao2023gnot} & 0.3268 & 0.9865 & 0.3497 & 0.9868 \\
GINO \citep{li2023geometry} & 0.4180 & 0.9645 & 0.2583 & 0.9923 \\
Transolver \citep{wu2024transolver} & 0.2996 & \textit{0.9896}& \textit{0.1500}& 0.9950 \\
Transolver++ \citep{luotransolver++} & 0.2897 & 0.9856 & 0.2269 & 0.9914 \\
LaMO \citep{tiwarilatent} & \textit{0.2713}& 0.9887 & 0.1505 & \textit{0.9958}\\ \midrule

\textbf{M-Transolver (Ours)} & \textbf{0.1761} & 0.9950 & \textbf{0.0960} & 0.9977 \\
\textbf{M-LaMO (Ours)} & 0.1988 & \textbf{0.9969} & 0.1015 & \textbf{0.9980} \\

\bottomrule
\end{tabular}%
}
\end{sc}
\end{table}

\subsection{Generalization to Out-of-Distribution Physics}

To assess robustness, we adopt the Out-of-Distribution (OOD) protocol from \citep{wu2024transolver}, evaluating performance on AirfRANS under unseen Reynolds numbers and Angles of Attack.

\textbf{1. OOD Reynolds Numbers: Robustness via Metric Locality.}
We evaluate flow regimes with Reynolds numbers outside the training distribution. M-Transolver achieves a relative $L_2$ error of $0.1761$, a substantial improvement over Transolver ($0.2996$) and Transolver++ ($0.2897$). We hypothesize that this robustness is partly due to the \textit{exponential decay} in learned feature distance. Physically, increasing the Reynolds number reduces the thickness of the boundary layer and introduces high-frequency local turbulence \citep{schlichting2016boundary}. Isotropic routing mechanisms can more easily mix localized high-frequency structures with broader flow patterns seen during training. By introducing a metric-locality bias, our method encourages the model to rely on feature-near interactions that appear more stable across regimes, rather than on dataset-specific long-range correlations that may fail at higher Reynolds numbers. \looseness=-1

\textbf{2. OOD Angles of Attack: Alignment via Anisotropy.}
For extrapolation to unseen geometric orientations, M-Transolver achieves a relative error of $0.0960$, significantly outperforming Transolver ($0.1500$) and LaMO ($0.1505$). We attribute this to the geometric adaptability of the precision matrix $\mathbf{P}_g$. While isotropic kernels are rotationally invariant, they are inefficient at tracking directional features (like wakes) that rotate relative to the grid. To cover a rotated wake, an isotropic kernel must indiscriminately expand its radius. In contrast, our results suggest that the anisotropic mechanism successfully learns to rotate its principal axes in tandem with the flow features. This allows the model to maintain a tightly aligned receptive field around the wake structure, regardless of its global orientation. \looseness=-1

\subsection{Discussion and Ablation Analysis}

Our ablation studies (Table~\ref{tab:combined_ablation}) reveal a consistent trend across both datasets: while removing either the quadratic distance term or anisotropy leads to moderate degradation, the model’s performance remains significantly stronger than the Transolver baseline. However, removing \textit{both} components causes the error to regress close to the baseline, indicating that each geometric mechanism contributes a distinct but complementary inductive bias. We interpret these results through the lens of attention geometry and physical structure.

\textbf{1. Complementary Geometry on AirfRANS.}
AirfRANS features two-dimensional flows dominated by sharp boundary layers and surface-driven pressure distributions. Here, we observe that removing the quadratic term increases \textit{surface} error disproportionately (from $0.0061$ to $0.0090$), while removing anisotropy increases \textit{volume} error (from $0.0014$ to $0.0023$). These trends reflect the dual roles of our metric components. \textit{Anisotropy} enables elliptical slicing aligned with thin shear layers and streamline directions, improving accuracy in modeling internal flow structure. \textit{Quadratic Decay} biases attention toward metric-near features, reducing long-range interference—especially important for accurately predicting near-wall scalar fields like pressure.
Notably, the full model achieves better-than-additive gains: combining both mechanisms yields over $60\%$ improvement in volume and surface errors compared to Transolver. Removing both components collapses this advantage, supporting the importance of their joint use for high-aspect-ratio physics.

\textbf{2. Locality Dominance in AirCraft.}
The AirCraft dataset involves fully three-dimensional geometries with complex topologies (e.g., separate wings, tail, fuselage). In this setting, the quadratic term emerges as crucial for predicting velocity components ($U$, $V$, $W$). This suggests that in irregular 3D domains, a \textit{metric-locality bias} is useful not just for resolution, but for preventing \textit{geometric cross-talk}—ensuring that tokens on the wing do not erroneously attend to the tail simply due to linear alignment. Meanwhile, anisotropy provides incremental benefits by capturing global shape effects such as wing sweep. Together, the two mechanisms contribute orthogonal benefits: directionality shapes the receptive field, while the Mahalanobis norm enforces a learned locality bias.

\begin{table*}[t]
\centering
\caption{Unified Ablation Study on Geometric Slicing Components. We evaluate the contribution of the Quadratic term (Exponential Locality) and Anisotropy (Directionality) across both the \textbf{AirfRANS} (2D) and \textbf{AirCraft} (3D) datasets. The ``w/o All'' variant removes both the quadratic distance term and anisotropic precision structure within our metric-routing implementation, yielding a linear isotropic routing variant; the Transolver reference is reported separately.
}
\label{tab:combined_ablation}
\begin{center}
\begin{small}
\begin{sc}
\resizebox{\textwidth}{!}{%
\begin{tabular}{l|cccc|cccccc}
\toprule
\multicolumn{1}{c|}{\multirow{2}{*}{\textbf{Model Variant}}} & \multicolumn{4}{c|}{\textbf{AirfRANS}} & \multicolumn{6}{c}{\textbf{AirCraft (Relative $L_2$ Error $\downarrow$)}} \\ \cmidrule(lr){2-5} \cmidrule(lr){6-11}
\multicolumn{1}{c|}{} & \textbf{Volume} $\downarrow$ & \textbf{Surf} $\downarrow$ & \textbf{$C_L$} $\downarrow$ & \textbf{$\rho_L$} $\uparrow$ & \textbf{$C_p$} & \textbf{$\rho$} & \textbf{$U$} & \textbf{$V$} & \textbf{$W$} & \textbf{$P$} \\ \midrule
\textbf{M-Transolver (Full)} & \textbf{0.0014} & \textbf{0.0061} & \textbf{0.0528} & \textbf{0.9991} & \textbf{0.0692} & \textbf{0.0353} & \textbf{0.0060} & \textbf{0.0919} & \textbf{0.0899} & \textbf{0.0389} \\ \midrule
w/o Anisotropy & 0.0023 & 0.0066 & 0.0752 & 0.9988 & 0.0702 & 0.0358 & 0.0071 & 0.0987 & 0.0984 & 0.0402 \\
w/o Quadratic Term & 0.0018 & 0.0090 & 0.0780 & 0.9989 & 0.0718 & 0.0363 & 0.0073 & 0.1024 & 0.0984 & 0.0411 \\
w/o All Components & 0.0049 & 0.0105 & 0.1159 & 0.9974 & 0.0791 & 0.0386 & 0.0093 & 0.1125 & 0.0997 & 0.0448 \\ \midrule
\textit{Transolver (Ref.)} & \textit{0.0037} & \textit{0.0142} & \textit{0.1030} & \textit{0.9978} & \textit{0.0748} & \textit{0.0379} & \textit{0.0091} & \textit{0.1107} & \textit{0.1016} & \textit{0.0439} \\ \bottomrule
\end{tabular}%
}
\end{sc}
\end{small}
\end{center}
\end{table*}

\subsection{Case Study: Visualizing Learned Metric Routing}
\label{sec:case_studies}

Having established the quantitative superiority of M-Transolver, we now revisit the geometric hypotheses from Section 1. To examine whether these gains are associated with spatial disentanglement and anisotropic alignment, we visualize the internal representations under three distinct physical regimes. \looseness=-1

\subsubsection{Hypothesis 1: From Isotropic Entanglement to Spatial Partitioning}

Our first premise was that isotropic attention forces \textit{mode collapse}, where multiple slices redundantly attend to the same global features. We test this on the Darcy Flow benchmark (Fig.~\ref{fig:darcy_slices}).

\textbf{Visualizing Disentanglement.} As hypothesized, the standard Transolver exhibits severe spatial redundancy, with the majority of slices collapsing into identical spherical blobs centered on the domain’s highest-energy region. In contrast, M-Transolver breaks this symmetry (Fig.~\ref{fig:darcy_ours}). The learned slices effectively partition the physical domain with minimal overlap, evolving from rigid spheres into flexible geometric primitives—elongated ellipsoids and boundary-hugging contours. This supports the interpretation that the Mahalanobis metric helps the model align with the porous medium structure rather than simply imposing a global grid.

\subsubsection{Hypothesis 2: Robustness in Chaotic Regimes}

We further argued that a metric-locality bias is useful to prevent feature entanglement in high-entropy systems. We examine this using the Navier-Stokes benchmark at the chaotic timestep $T=20$. \looseness=-1

\textbf{Separation vs. Noise.} As fluid dynamics evolve into chaos, the baseline representation degrades into global noise, covering the domain indiscriminately (Fig.~\ref{fig:ns_baseline}). M-Transolver, however, maintains \textit{local separation} even in this highly non-linear regime. The learned metric-locality bias acts as a stabilizer, preventing attention heads from ``leaking'' into the chaotic background and ensuring that individual slices track distinct vortices without entanglement.

\subsubsection{Hypothesis 3: The Reality of Learned Anisotropy}

Finally, we investigate the geometric mechanism by visualizing the learned precision matrix $\mathbf{P}_s$ on the AirCraft dataset. We compare M-Transolver against an ablated variant where the quadratic term ($\mathbf{x}^\top \mathbf{P} \mathbf{x}$) is removed.

\textbf{Heatmap Analysis.} The heatmap for the ablated model (Fig.~\ref{fig:anisotropy_comparison}) displays distinct horizontal banding. This indicates that while the learned precision scales vary \textit{between} slices, they remain uniform \textit{within} each slice ($\mathbf{P}_s \approx \alpha_s \mathbf{I}$).

\textbf{Mathematical Interpretation: The Hierarchy of Expressivity.}
This ablation study establishes a clear hierarchy of geometric adaptation:

\begin{itemize}
    \item \textbf{Baseline (Fixed Linear):} Standard Transolver uses linear attention ($\mathbf{q}^\top \mathbf{k}$) with a fixed global temperature ($1/\sqrt{d}$), forcing all slices to operate at the same scale, limiting flexibility.

    \item \textbf{Ablated (Adaptive Linear):} By removing the quadratic term but keeping the learnable $\mathbf{P}_s$, the model reverts to linear attention ($\mathbf{x}^\top \mathbf{P}_s \boldsymbol{\mu}$), but with a crucial advantage. The ``banding'' behavior confirms it functions as an \textit{Adaptive Temperature} mechanism ($\alpha_s \mathbf{x}^\top \boldsymbol{\mu}$), where distinct slices learn different sensitivities—sharpening focus for fine details or flattening it for global trends. This conceptually aligns with the motivation behind the explicit local adaptive temperature in Transolver++ \citep{luotransolver++}, though our method achieves this adaptivity implicitly through the precision matrix rather than a separate heuristic.

    \item \textbf{Full Model (Anisotropic Metric-Local):} The full model (Fig.~\ref{fig:anisotropy_comparison}) restores the quadratic term ($\mathbf{x}^\top \mathbf{P}_s \mathbf{x}$), closing the geometry from infinite hyperplanes into exponentially localized \textit{ellipsoids}. The high-variance texture in the heatmap indicates that the metric deforms along specific axes. This adds the critical missing capability: \textit{Metric Locality} (suppressing metric-distant noise) and \textit{Directional Alignment} (hugging flow streamlines), which neither the fixed linear baseline nor the adaptive linear ablation can achieve.
\end{itemize}

\section{Conclusion}

In this work, we identified and addressed a geometric mismatch in neural operators: the disparity between predominantly isotropic tokenization mechanisms and the highly anisotropic, irregular structures that arise in many physical systems. We proposed \textbf{Metric-Enhanced Token Routing Operator (METRO)}, a modular framework that reformulates the tokenization process as a learnable anisotropic latent clustering problem. By encouraging \textit{metric locality} via quadratic constraints and \textit{directionality} via learnable anisotropy, METRO generates receptive fields that naturally align with physical structures such as boundary layers and wakes. Furthermore, by effectively shaping the receptive fields to match underlying physical structures, METRO achieves a highly compact latent representation. This token efficiency allows the model to capture complex geometries using significantly fewer latent slices, resulting in both lower error rates and faster training times under strict parameter-matched conditions. Our extensive evaluation on industrial benchmarks demonstrates that these contributions are not merely additive but synergistic: METRO achieves a $43\%$ error reduction averaged across physical metrics on AirfRANS, sets a new state-of-the-art in OOD extrapolation, and consistently enhances performance across both Transformer and State-Space Model backbones.

\bibliography{neurips_2026}
\bibliographystyle{icml2026}

%%%%%%%%%%%%%%%%%%%%%%%%%%%%%%%%%%%%%%%%%%%%%%%%%%%%%%%%%%%%
\newpage

\appendix

\begin{figure*}[t!] % Changed [t] to [t!] to force placement priority
    \centering
    \begin{subfigure}[b]{1.0\textwidth}
        \centering
        \includegraphics[width=\linewidth]{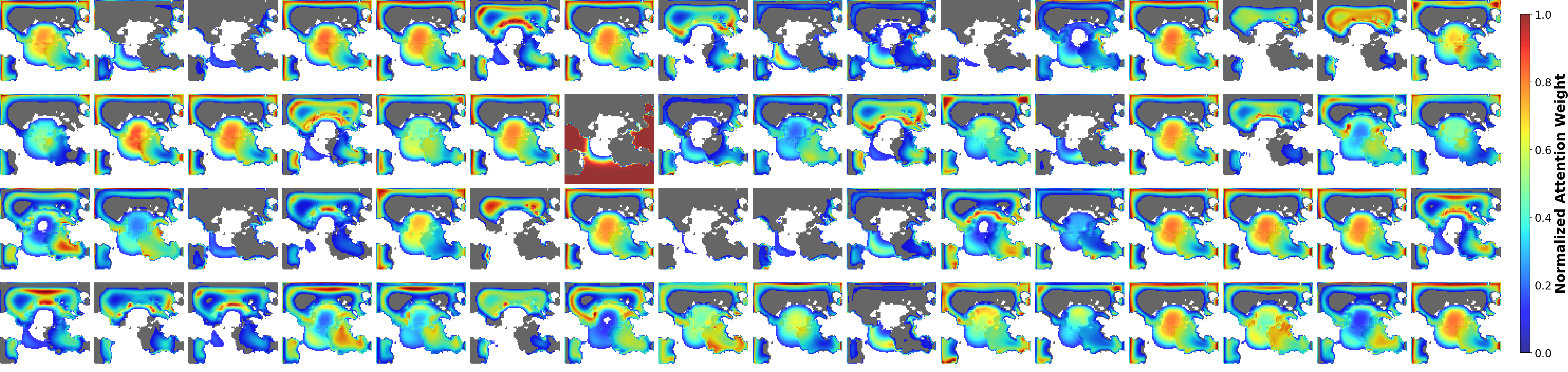}
        \caption{Standard Transolver (Baseline)}
        \label{fig:darcy_baseline}
    \end{subfigure}
    \par\bigskip
    \begin{subfigure}[b]{1.0\textwidth}
        \centering
        \includegraphics[width=\linewidth]{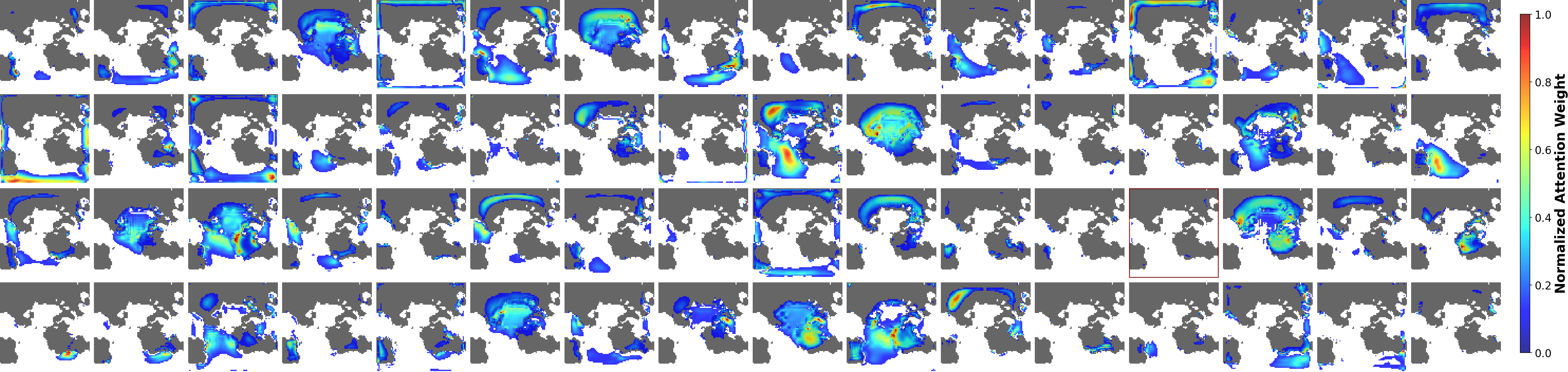}
        \caption{M-Transolver (Ours)}
        \label{fig:darcy_ours}
    \end{subfigure}
    \caption{\textbf{Learned Attention Topology on Darcy Flow.} Visualization of the attention weights for 64 latent slices. \textbf{(a)} The baseline Transolver suffers from high redundancy, with most slices collapsing into identical spherical shapes in the domain center. \textbf{(b)} M-Transolver learns a diverse set of anisotropic shapes (ellipses, boundary layers, curves) that efficiently tile the domain with minimal overlap, demonstrating superior spatial disentanglement.}
    \label{fig:darcy_slices}
\end{figure*}

\begin{figure*}[t!] % Changed [t] to [t!]
    \centering
    \begin{subfigure}[b]{1.0\textwidth}
        \centering
        \includegraphics[width=\linewidth]{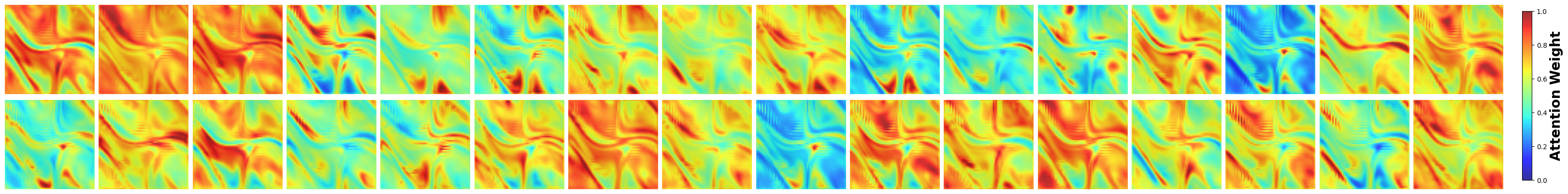}
        \caption{Standard Transolver (Baseline) at T=20}
        \label{fig:ns_baseline}
    \end{subfigure}
    \par\bigskip
    \begin{subfigure}[b]{1.0\textwidth}
        \centering
        \includegraphics[width=\linewidth]{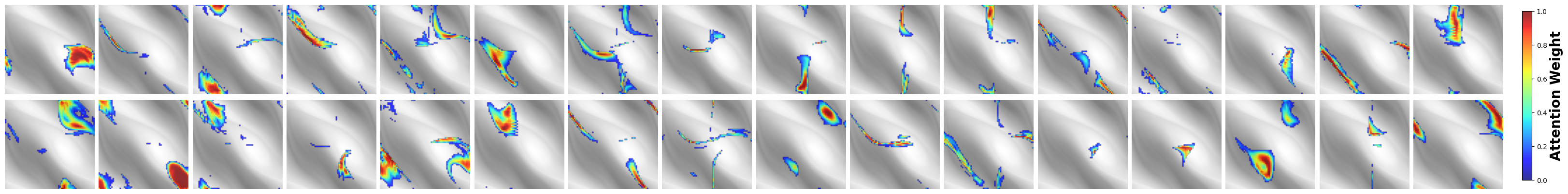}
        \caption{M-Transolver (Ours) at T=20}
        \label{fig:ns_ours}
    \end{subfigure}
    \caption{\textbf{Attention Robustness in Chaotic Flow (Navier-Stokes).} Comparison of learned slice attention at a late timestep ($T=20$). \textbf{(a)} The baseline Transolver slices become highly entangled, attending to the entire domain and losing local specificity. \textbf{(b)} M-Transolver maintains clear separation, with slices focusing on distinct, localized flow features despite the chaotic global dynamics.}
    \label{fig:ns_comparison}
\end{figure*}

\begin{figure}[t]
    \centering
    % One single image file containing both plots side-by-side
    \includegraphics[width=\linewidth]{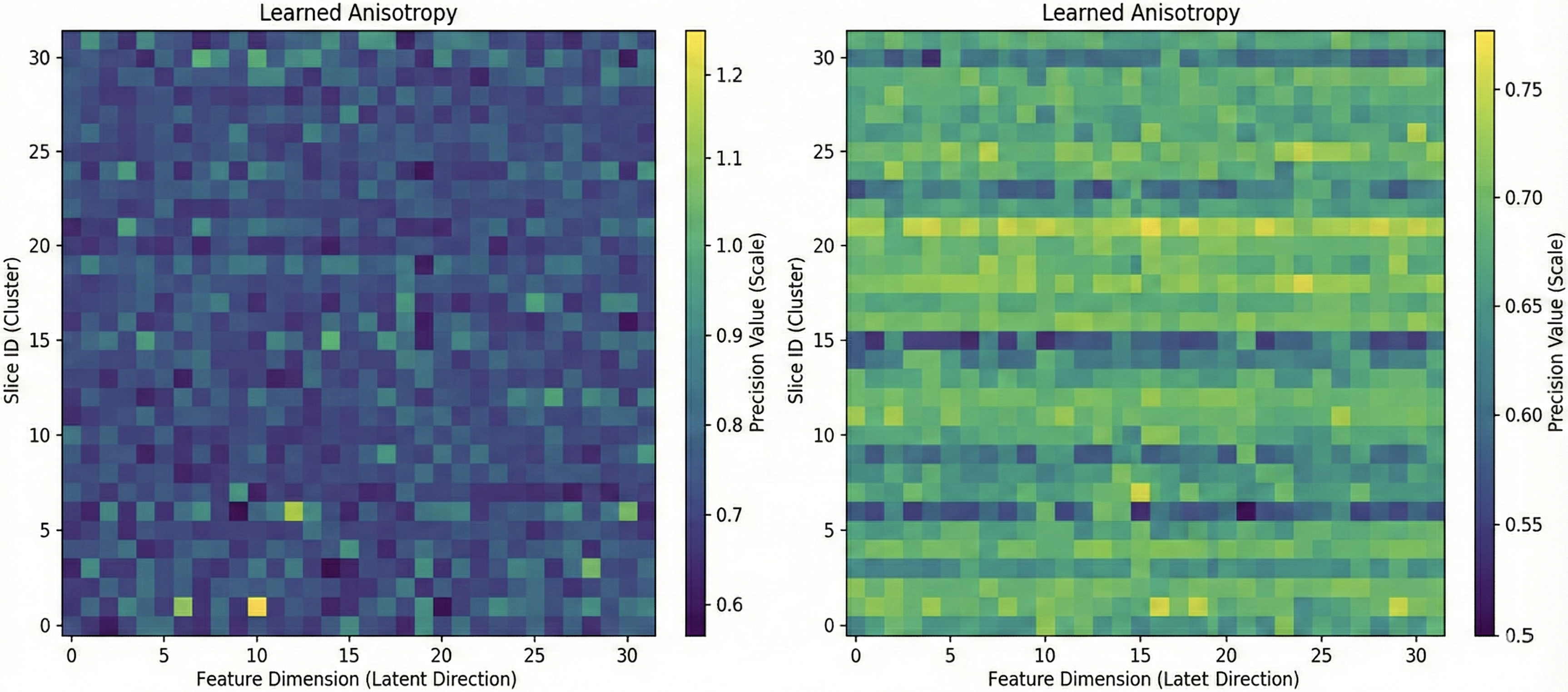}

    \caption{\textbf{Learned Anisotropy ($\mathbf{P}_s$).} (L) The Full M-Transolver captures complex directional variance. (R) Without the quadratic term, the model collapses into isotropic horizontal banding.}
    \label{fig:anisotropy_comparison}
\end{figure}

\section{Other Main Experiments}
\label{app:other_experiments}

In this section, we provide extended quantitative results and detailed analyses that complement the main text. We first extend our evaluation to the domain of automotive aerodynamics using the ShapeNet-Car benchmark, validating the model's robustness on bluff-body flows. We then provide a comprehensive breakdown of the standard PDE benchmarks, offering a granular analysis of the performance trade-offs between architectural paradigms across different geometric topologies.

\begin{table}[H]
\centering
\caption{Quantitative results on the \textbf{ShapeNet-Car} benchmark (3D automotive aerodynamics). We compare against geometric baselines and state-of-the-art neural operators. We report relative $L_2$ error ($\downarrow$) for Volume and Surface fields, and the error/correlation for the Drag Coefficient ($C_D$, $\rho_D$). The best baseline results are \textit{italicized}, and our results are \textbf{bolded}.}
\label{tab:shapenet_results}
\begin{sc}
\resizebox{0.8\textwidth}{!}{%
\begin{tabular}{l|cccc}
\toprule
\multicolumn{1}{c|}{\multirow{2}{*}{\textbf{Model}}} & \multicolumn{4}{c}{\textbf{ShapeNet-Car}} \\ \cmidrule(lr){2-5} 
\multicolumn{1}{c|}{} & Volume $\downarrow$ & Surf $\downarrow$ & \textbf{$C_D$} $\downarrow$ & \textbf{$\rho_D$} $\uparrow$ \\ \midrule

GraphSAGE \citep{hamilton2017inductive} & 0.0461 & 0.1050 & 0.0270 & 0.9695 \\
PointNet \citep{qi2017pointnet} & 0.0494 & 0.1104 & 0.0298 & 0.9583 \\
Graph U-Net \citep{gao2019graph} & 0.0471 & 0.1102 & 0.0226 & 0.9725 \\
MeshGraphNet \citep{pfaff2020learning} & 0.0354 & 0.0781 & 0.0168 & 0.9840 \\
GNO \citep{li2020neural} & 0.0383 & 0.0815 & 0.0172 & 0.9834 \\
Geo-FNO \citep{li2023fourier} & 0.1670 & 0.2378 & 0.0664 & 0.8280 \\
Galerkin \citep{cao2021choose} & 0.0339 & 0.0878 & 0.0179 & 0.9764 \\
GNOT \citep{hao2023gnot} & 0.0329 & 0.0798 & 0.0178 & 0.9833 \\
GINO \citep{li2023geometry} & 0.0386 & 0.0810 & 0.0184 & 0.9826 \\
3D-GeoCA \citep{deng2024geometry} & 0.0319 & 0.0779 & 0.0159 & 0.9842 \\
Transolver \citep{wu2024transolver} & 0.0225 & 0.0789 & 0.0123 & \textit{0.9915} \\
Transolver++ \citep{luotransolver++} & 0.0226 & 0.0800 & 0.0132 & 0.9914 \\
LaMO \citep{tiwarilatent} & \textit{0.0212} & \textit{0.0763} & \textit{0.0119} & 0.9907 \\ \midrule

\textbf{M-Transolver (Ours)} & \textbf{0.0173} & \textbf{0.0741} & \textbf{0.0103} & \textbf{0.9938} \\ 
\midrule

\textbf{Relative Promotion} & \textbf{+18.4\%} & \textbf{+2.9\%} & \textbf{+13.4\%} & / \\ \bottomrule
\end{tabular}%
}
\end{sc}
\end{table}

\subsection{Generalization to Automotive Aerodynamics (ShapeNet-Car)}
\label{app:shapenet}

To validate the robustness of M-Transolver on bluff-body aerodynamics—characterized by massive flow separation and turbulent wakes—we evaluate performance on the \textit{ShapeNet-Car} dataset. Unlike the streamlined flows of aviation airfoils, this task is dominated by drag forces rather than lift.

\textbf{1. Accurate Drag Estimation.}
Estimating the Drag Coefficient ($C_D$) is the primary objective in automotive design optimization. As shown in Table \ref{tab:shapenet_results}, M-Transolver achieves a relative error of $0.0103$, reducing the error of the best baseline (LaMO, $0.0119$) by $13.4\%$. Furthermore, it attains a Spearman's rank correlation of $0.9938$, surpassing Transolver $0.9915$. This high ranking accuracy is critical for design optimization loops, ensuring the model can reliably distinguish between subtle geometric modifications that reduce drag.

\textbf{2. Wake Resolution via Volume Error.}
We observe a substantial improvement in \textit{Volume} error ($+18.4\%$ over LaMO). Automotive simulations involve a large computational domain where the wake region is spatially extensive and highly turbulent. The significant gain in volume accuracy suggests that the Mahalanobis metric effectively focuses attention on these complex wake structures, likely filtering out the uninformative free-stream regions that often dilute the performance of isotropic baselines.

\subsection{Extended Analysis on Standard PDE Benchmarks}
\label{app:standard_benchmarks}

In this section, we present the comprehensive quantitative results and specific experimental configurations for all standard PDE benchmarks evaluated in this study.

As detailed in Table \ref{tab:main_results}, our results demonstrate that the proposed anisotropic routing provides a consistent performance gains across the evaluated benchmark suite. On highly unstructured and irregular datasets, such as \textit{Pipe} (which requires precise modeling of variable boundaries) and the \textit{Elasticity} point cloud, M-Transolver achieves state-of-the-art errors of 0.0023 and 0.0044, respectively. This supports the view that global attention, when augmented with a local anisotropic metric, can capture complex pairwise interactions in disordered systems. Furthermore, the strong performance of M-LaMO on datasets like Plasticity and Airfoil indicates that our tokenization provides backbone-agnostic benefits.

Crucially, this representational advantage extends seamlessly to perfectly regular grids. To ensure a fair architectural evaluation of latent tokenization capacity, we evaluated the state-of-the-art SSM baseline LaMO using the identical isotropic ``slice-and-process'' embedding utilized by Transolver. Under this unified architectural formulation, replacing the isotropic clustering with our Mahalanobis metric allows M-Transolver to significantly outperform all baselines, setting a new state-of-the-art with a \textit{Navier-Stokes} error of 0.0678 and a \textit{Darcy} error of 0.0045. Ultimately, these results demonstrate that introducing a geometry-aware, anisotropic tokenization strategy allows models to efficiently compress both highly structured grids and heavily disordered point clouds into physically meaningful latent representations.

\begin{table}[h]
\centering
\caption{Comparative performance (Relative L2 Error) on standard PDE benchmarks. Datasets are grouped by geometry type.}
\label{tab:main_results}
\begin{sc}
\resizebox{\textwidth}{!}{%
\begin{tabular}{c|l|c|cc|ccc}
\toprule
\multicolumn{2}{c|}{\multirow{2}{*}{\textbf{Operator}}} & \textbf{Point Cloud} & \multicolumn{2}{c|}{\textbf{Regular Grid}} & \multicolumn{3}{c}{\textbf{Structured Mesh}} \\ \cmidrule(lr){3-3} \cmidrule(lr){4-5} \cmidrule(lr){6-8} 
\multicolumn{2}{c|}{} & \textbf{Elasticity} & \textbf{Navier-Stokes} & \textbf{Darcy} & \textbf{Plasticity} & \textbf{Airfoil} & \textbf{Pipe} \\ \midrule

\multirow{4}{*}{\rotatebox{90}{\textbf{Classic}}} 
& U-Net \citep{ronneberger2015u} & 0.0235 & 0.1982 & 0.0080 & 0.0051 & 0.0079 & 0.0065 \\
& ResNet \citep{he2016deep} & 0.0262 & 0.2753 & 0.0587 & 0.0233 & 0.0391 & 0.0120 \\
& Swin \citep{liu2021swin} & 0.0283 & 0.2248 & 0.0397 & 0.0170 & 0.0270 & 0.0109 \\
& DeepONet \citep{lu2021learning} & 0.0965 & 0.2972 & 0.0588 & 0.0135 & 0.0385 & 0.0097 \\ \midrule

\multirow{6}{*}{\rotatebox{90}{\textbf{Frequency}}} 
& WMT \citep{gupta2021multiwavelet} & 0.0359 & 0.1541 & 0.0082 & 0.0076 & 0.0075 & 0.0077 \\
& U-FNO \citep{wen2022u} & 0.0239 & 0.2231 & 0.0183 & 0.0039 & 0.0269 & 0.0056 \\
& FNO \citep{lifourier} & 0.0229 & 0.1556 & 0.0108 & 0.0074 & 0.0138 & 0.0067 \\
& U-NO \citep{rahmanu} & 0.0258 & 0.1713 & 0.0113 & 0.0034 & 0.0078 & 0.0100 \\
& F-FNO \citep{tranfactorized} & 0.0263 & 0.2322 & 0.0077 & 0.0047 & 0.0078 & 0.0070 \\
& LSM \citep{wu2023solving} & 0.0218 & 0.1535 & 0.0065 & 0.0025 & 0.0059 & 0.0050 \\ \midrule

\multirow{8}{*}{\rotatebox{90}{\textbf{Transformer}}} 
& Galerkin \citep{cao2021choose} & 0.0240 & 0.1401 & 0.0084 & 0.0120 & 0.0118 & 0.0098 \\
& HT-Net \citep{liu2022ht} & / & 0.1847 & 0.0079 & 0.0333 & 0.0065 & 0.0059 \\
& OFormer \citep{litransformer} & 0.0183 & 0.1705 & 0.0124 & 0.0017 & 0.0183 & 0.0168 \\
& GNOT \citep{hao2023gnot} & 0.0086 & 0.1380 & 0.0105 & 0.0336 & 0.0076 & 0.0047 \\
& FactFormer \citep{li2023scalable} & / & 0.1214 & 0.0109 & 0.0312 & 0.0071 & 0.0060 \\
& ONO \citep{xiao2024improved} & 0.0118 & 0.1195 & 0.0076 & 0.0013 & 0.0053 & 0.0046 \\
& Transolver \citep{wu2024transolver} & 0.0064 & 0.0900 & 0.0057 & 0.0012 & 0.0053 & 0.0033 \\
& Transolver++ \citep{luotransolver++} & 0.0052 & 0.0719 & 0.0049 & 0.0011 & 0.0048 & 0.0027 \\
& \textbf{M-Transolver (ours)} & \textbf{0.0044} & \textbf{0.0678} & \textbf{0.0045} & 0.0007 & 0.0045 & \textbf{0.0023} \\ \midrule

\multirow{1}{*}{\rotatebox{90}{\textbf{SSM}}} 
& LaMO \citep{tiwarilatent} & 0.0050 & 0.1010 & 0.0064 & 0.0007 & 0.0041 & 0.0038 \\
& \textbf{M-LaMO (ours)} & 0.0047 & 0.1027 & 0.0071 & \textbf{0.0003} & \textbf{0.0037} & 0.0035 \\ \bottomrule

% \multicolumn{2}{c|}{\textbf{Relative Promotion}} & \textbf{+12.0\%} & \textbf{+37.4\%} & -15.4\% & \textbf{+57.1\%} & \textbf{+9.8\%} & \textbf{+9.8\%} \\ \bottomrule
\end{tabular}%
}
\end{sc}
\end{table}

\section{Algorithmic Formulation and Visual Analysis}
\label{sec:appendix_algo}

In this section, we provide the formal procedural details of the M-Transolver layer and visually compare its attention mechanism against the baseline.

\begin{algorithm}[H]
\caption{M-Transolver: Metric-Aware Slicing Mechanism}
\label{alg:m_transolver}
\begin{algorithmic}[1]
\Require Input features $\mathbf{X}\in\mathbb{R}^{N\times D_{\mathrm{in}}}$, number of slices $G$.
\Ensure Output features $\mathbf{Y}_{\mathrm{out}}\in\mathbb{R}^{N\times D_{\mathrm{out}}}$.

\State \textbf{1. Projection}
\State $\mathbf{X}_{\mathrm{geo}} \gets \mathrm{Linear}_{\mathrm{geo}}(\mathbf{X})$
\Comment{$\mathbf{X}_{\mathrm{geo}}\in\mathbb{R}^{N\times D}$}
\State $\mathbf{V} \gets \mathrm{Linear}_{\mathrm{val}}(\mathbf{X})$
\Comment{$\mathbf{V}\in\mathbb{R}^{N\times C}$}

\State \textbf{2. Metric Parameters}
\State Retrieve learnable centers $\boldsymbol{\mu}\in\mathbb{R}^{G\times D}$ and raw diagonal precisions $\mathbf{P}_{\mathrm{raw}}\in\mathbb{R}^{G\times D}$.
\State $\mathbf{P} \gets \mathrm{softplus}(\mathbf{P}_{\mathrm{raw}})+\epsilon_{\mathrm{pd}}$
\Comment{Positive diagonal precisions}

\State \textbf{3. Metric Slice Assignment}
\For{$g=1,\dots,G$}
    \For{$n=1,\dots,N$}
        \State $d^2_{n,g}
        \gets
        \sum_{k=1}^{D}
        P_{g,k}\bigl(X_{\mathrm{geo},n,k}-\mu_{g,k}\bigr)^2$
        \Comment{Diagonal Mahalanobis distance}
        \State $L_{n,g}
        \gets
        -\frac12 d^2_{n,g}
        +
        \frac12\sum_{k=1}^{D}\log P_{g,k}$
        \Comment{Metric logit}
    \EndFor
\EndFor

\State $\mathbf{A}^{\mathrm{row}} \gets \mathrm{softmax}_{g}(\mathbf{L})$
\Comment{Row-wise slice assignments, $\sum_g A^{\mathrm{row}}_{n,g}=1$}

\State \textbf{4. Aggregate into Slice Tokens}
\State $m_g \gets \sum_{n=1}^{N} A^{\mathrm{row}}_{n,g}$ for each $g$
\Comment{Slice masses}
\State $A^{\mathrm{slice}}_{n,g}
\gets
A^{\mathrm{row}}_{n,g}/m_g$
\Comment{Column-normalized aggregation weights}
\State $\mathbf{T}_{\mathrm{slice}}
\gets
(\mathbf{A}^{\mathrm{slice}})^\top \mathbf{V}$
\Comment{$\mathbf{T}_{\mathrm{slice}}\in\mathbb{R}^{G\times C}$}

\State \textbf{5. Process Slice Tokens}
\State $\mathbf{T}_{\mathrm{out}}
\gets
\mathrm{TransformerBlock}(\mathbf{T}_{\mathrm{slice}})$
\Comment{Inter-slice processing}

\State \textbf{6. Broadcast Back to Points}
\State $\mathbf{Y}
\gets
\mathbf{A}^{\mathrm{row}}\mathbf{T}_{\mathrm{out}}$
\Comment{Point-wise reconstruction}
\State $\mathbf{Y}_{\mathrm{out}}
\gets
\mathrm{Linear}_{\mathrm{out}}(\mathbf{Y})$

\State \Return $\mathbf{Y}_{\mathrm{out}}$
\end{algorithmic}
\end{algorithm}

% --- THE FIGURE BLOCK ---
\begin{figure}[H]
    \centering
    \vspace{1em} % Add a little space between algorithm and figure
    
    % Panel A: Transolver
    \begin{subfigure}[b]{0.48\textwidth}
        \centering
        \includegraphics[width=\linewidth]{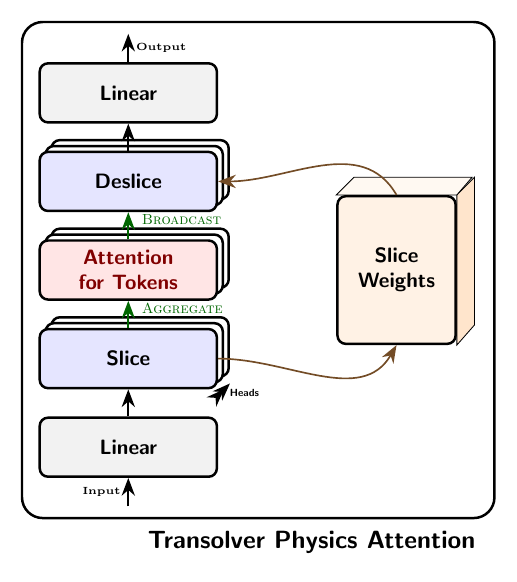}
        \caption{Baseline Transolver (Isotropic)}
        \label{fig:transolver_attn}
    \end{subfigure}
    \hfill
    % Panel B: M-Transolver
    \begin{subfigure}[b]{0.48\textwidth}
        \centering
        \includegraphics[width=\linewidth]{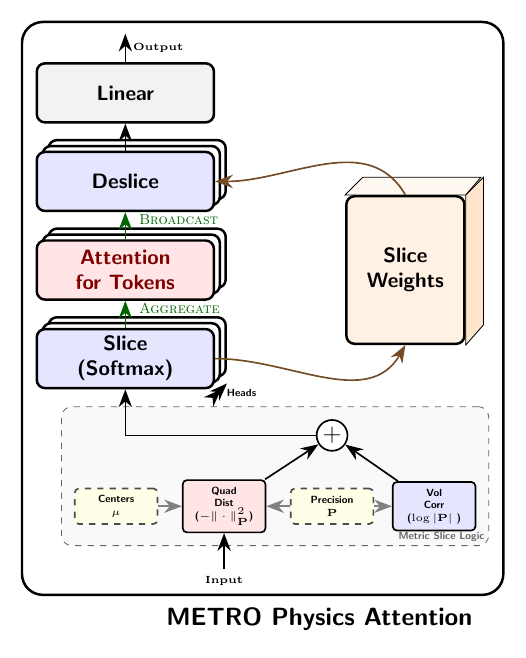}
        \caption{M-Transolver (Anisotropic)}
        \label{fig:malo_attn}
    \end{subfigure}
    
    \caption{\textbf{Impact of Metric Learning on Attention.} 
    (a) The standard Transolver uses isotropic Euclidean distances, treating all directions equally. This often results in attention leaking into irrelevant regions (e.g., across boundary layers). 
    (b) \textbf{M-Transolver} adapts the metric tensor $\mathbf{P}$ to the local physics. The attention mechanism naturally elongates along flow streamlines and compresses across normal gradients, resulting in cleaner, physics-aligned feature aggregation as described in Algorithm \ref{alg:m_transolver}.}
    \label{fig:attention_comparison}
\end{figure}

\section{Continuous Interpretation of Metric Routing}
\label{app:metric_routing_theory}

We give a continuous interpretation of METRO's Mahalanobis metric routing. Rather than treating the full nonlinear architecture as a single linear integral operator, we separate the construction into four parts: (i) metric slice aggregation as a normalized weighted measurement, (ii) the finite-dimensional encoder--processor--decoder form of the full layer, (iii) the linear special case, where the construction reduces to a finite-rank integral operator, and (iv) the local pullback pseudometric induced by the latent Mahalanobis score.

Let \(\Omega\subset\mathbb{R}^m\) be compact and let \(\nu\) be the sampling measure on \(\Omega\). When \(\nu\) has density \(\rho\) with respect to Lebesgue measure, we write \(d\nu(\xi)=\rho(\xi)d\xi\). The distinction is important: without explicit quadrature weights, the continuum limit corresponds to the mesh sampling measure, not necessarily to physical volume measure.

The central object is the metric logit
\[
    \ell_g(\xi)
    =
    -\frac12\|\phi(\xi)-\mu_g\|_{P_g}^2
    +\frac12\log\det P_g,
\]
where \(\phi:\Omega\to\mathbb{R}^D\) is the learned geometry encoder, \(\mu_g\in\mathbb{R}^D\), and \(P_g\succ0\). This logit defines a slice-specific anisotropic notion of proximity in the learned feature space. After converting logits into nonnegative assignment weights, each latent slice computes a normalized metric-weighted measurement of the input field. This corresponds to the column-normalized aggregation weights used for slicing in Algorithm~\ref{alg:m_transolver}.

For the default METRO routing rule,
\[
    w_g(\xi)
    =
    \frac{\exp(\ell_g(\xi))}
    {\sum_{r=1}^G \exp(\ell_r(\xi))},
\]
the weights form a partition of unity across slices:
\[
    w_g(\xi)\ge0,
    \qquad
    \sum_{g=1}^G w_g(\xi)=1.
\]
The aggregation result below does not depend on the softmax form specifically. It applies to any bounded, measurable, nonnegative metric assignment \(w_g\) with positive finite mass, including unnormalized assignments such as \(\exp(\ell_g)\), \(\mathrm{softplus}(\ell_g)\), or \(\sigma(\ell_g)\).

All results in this section should be read conditionally on fixed learned parameters and fixed assignment functions. If the assignments depend on the input field, the resulting operator is generally nonlinear even before applying the latent processor.

\begin{proposition}[Continuous limit of metric slice aggregation]
\label{prop:metric_aggregation_limit}
Let \(\xi_1,\ldots,\xi_N\) be i.i.d. samples from \(\nu\). Let \(f:\Omega\to\mathbb{R}^C\) satisfy \(f\in L^1(\Omega,\nu;\mathbb{R}^C)\). Assume \(w_g:\Omega\to\mathbb{R}_{\ge0}\) is bounded, measurable, and has positive mass
\[
    \alpha_g
    =
    \int_\Omega w_g(\xi)\,d\nu(\xi)
    >
    0.
\]
Define the empirical metric slice coefficient
\[
    z_{g,N}(f)
    =
    \frac{
        \sum_{i=1}^N w_g(\xi_i)f(\xi_i)
    }{
        \sum_{i=1}^N w_g(\xi_i)
    }.
\]
Then
\[
    z_{g,N}(f)
    \to
    z_g(f)
    =
    \frac{
        \int_\Omega w_g(\xi)f(\xi)\,d\nu(\xi)
    }{
        \int_\Omega w_g(\xi)\,d\nu(\xi)
    }
\]
almost surely as \(N\to\infty\), componentwise in \(\mathbb{R}^C\).
\end{proposition}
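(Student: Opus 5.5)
The plan is to apply the strong law of large numbers separately to the numerator and the denominator of \(z_{g,N}(f)\), after dividing both by \(N\), and then pass to the ratio. Write
\[
    z_{g,N}(f)
    =
    \frac{\frac1N\sum_{i=1}^N w_g(\xi_i)f(\xi_i)}{\frac1N\sum_{i=1}^N w_g(\xi_i)} ,
\]
which does not change the value of the quotient whenever the denominator is nonzero.

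First, the denominator. The random variables \(w_g(\xi_i)\) are i.i.d., nonnegative and bounded by \(\|w_g\|_\infty\), so they are integrable with mean \(\alpha_g\). Kolmogorov's strong law therefore gives
\[
    \frac1N\sum_{i=1}^N w_g(\xi_i)\to\alpha_g \quad\text{almost surely.}
\]
Second, the numerator, componentwise: for each \(c=1,\dots,C\), the variables \(w_g(\xi_i)f_c(\xi_i)\) are i.i.d. Integrability follows from
\[
    |w_g f_c|\le\|w_g\|_\infty|f_c|
    \quad\text{and}\quad
    f\in L^1(\Omega,\nu;\mathbb{R}^C).
\]
The strong law then yields
\[
    \frac1N\sum_{i=1}^N w_g(\xi_i)f_c(\xi_i)\to\int_\Omega w_g f_c\,d\nu \quad\text{almost surely.}
\]
Measurability of \(w_g f_c\) is immediate from measurability of both factors. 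Only the first moment is needed, so no boundedness of \(f\) or variance assumption enters.

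Third, combine the two limits. Intersect the finitely many full-measure events, one for the denominator and one for each of the \(C\) numerator components; the intersection still has probability one. On this event the denominator converges to \(\alpha_g>0\), so it is strictly positive for all sufficiently large \(N\). Hence \(z_{g,N}(f)\) is well defined eventually, and the earlier finite-\(N\) cases where all sampled weights vanish do not affect an almost-sure limit. The map \((a,b)\mapsto a/b\) is continuous on \(\mathbb{R}\times(0,\infty)\), so the ratio converges to \(z_g(f)\) componentwise.

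The only delicate point is the possibly vanishing denominator at finite \(N\). The statement leaves \(z_{g,N}\) undefined there, and it is handled as above using \(\alpha_g>0\). Everything else is a direct application of the strong law to integrable i.i.d. sequences.
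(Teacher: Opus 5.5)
Your proposal is correct and follows the paper's own proof: the strong law of large numbers applied to the numerator (componentwise) and to the denominator, combined through the ratio using $\alpha_g>0$. Your handling of the finitely many almost-sure events and of the denominator being eventually positive makes explicit details that the paper's proof leaves implicit.
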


\begin{proof}
Since \(w_g\) is bounded and \(f\in L^1(\Omega,\nu;\mathbb{R}^C)\), each component of \(w_g f\) is integrable. By the strong law of large numbers, for each component,
\[
    \frac1N\sum_{i=1}^N w_g(\xi_i)f(\xi_i)
    \to
    \int_\Omega w_g(\xi)f(\xi)\,d\nu(\xi)
\]
almost surely. Similarly,
\[
    \frac1N\sum_{i=1}^N w_g(\xi_i)
    \to
    \int_\Omega w_g(\xi)\,d\nu(\xi)
    =
    \alpha_g
\]
almost surely. Since \(\alpha_g>0\), the ratio converges componentwise to the stated limit.
\end{proof}

\paragraph{Deterministic meshes.}
The i.i.d. assumption is used only to state a clean Monte Carlo limit. For deterministic meshes, the same interpretation applies whenever the empirical quadrature rule converges to integration against \(\nu\). With quadrature weights \(q_i\), the corresponding discrete slice coefficient is
\[
    z_{g,N}^{q}(f)
    =
    \frac{
        \sum_{i=1}^N q_i w_g(\xi_i)f(\xi_i)
    }{
        \sum_{i=1}^N q_i w_g(\xi_i)
    }.
\]
If no quadrature weights are used, the limiting measure is the empirical mesh-node distribution.

\begin{proposition}[Metric encoder--processor--decoder form]
\label{prop:metric_epd}
For fixed metric assignment functions \(w_g\) and reconstruction weights \(\widetilde w_g\), the continuous METRO layer can be written as
\[
    \widehat f
    =
    D_{\widetilde w}\circ F_\theta\circ E_w(f),
\]
where the metric encoder is
\[
    (E_w f)_g
    =
    \frac{
        \int_\Omega w_g(\xi)f(\xi)\,d\nu(\xi)
    }{
        \int_\Omega w_g(\xi)\,d\nu(\xi)
    },
\]
the latent processor is \(F_\theta:\mathbb{R}^{G\times C}\to\mathbb{R}^{G\times C'}\), and the decoder is
\[
    (D_{\widetilde w}a)(y)
    =
    \sum_{g=1}^G \widetilde w_g(y)a_g.
\]
Thus, METRO acts through a finite-dimensional set of metric-weighted latent measurements before reconstructing in the output domain.
\end{proposition}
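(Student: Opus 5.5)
The plan is to prove Proposition~\ref{prop:metric_epd} by direct decomposition of the continuous METRO layer into its three stages, reading each off the discrete pipeline of Algorithm~\ref{alg:m_transolver} in the continuum limit. The layer in Algorithm~\ref{alg:m_transolver} consists of three maps applied in sequence: column-normalized aggregation into slice tokens, the TransformerBlock (or Mamba) acting on the $G$ tokens, and row-wise broadcast back to points. I will identify each stage with $E_w$, $F_\theta$, and $D_{\widetilde w}$ respectively, and then the composition identity follows.

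\textbf{Encoder.} With the assignment functions $w_g$ held fixed (as stipulated at the start of this appendix), Step 4 computes the slice token as $\sum_n w_g(\xi_n) f(\xi_n)/\sum_n w_g(\xi_n)$, where $f$ plays the role of the value features $\mathbf{V}$. By Proposition~\ref{prop:metric_aggregation_limit}, this converges to $\int w_g f\,d\nu / \int w_g\,d\nu$. Its deterministic-mesh analogue is the version with quadrature weights $q_i$. I take this limit as the definition of $(E_w f)_g$. The only hypotheses needed are that $w_g$ is bounded, measurable, and has positive mass $\alpha_g>0$, which makes the quotient well defined. For softmax weights this holds because $0<w_g\le 1$ and $\nu(\Omega)>0$. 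The result is an element of $\mathbb{R}^{G\times C}$.

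\textbf{Processor.} Step 5 maps $\mathbb{R}^{G\times C}$ to $\mathbb{R}^{G\times C'}$ by a fixed parametric map $F_\theta$. No structure is required of it beyond being a function on finite arrays, so it is simply inserted.

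\textbf{Decoder.} Step 6 sets $Y_n=\sum_g A^{\mathrm{row}}_{n,g}(T_{\mathrm{out}})_g$, that is, $\widetilde w_g(y)=w_g(y)$ evaluated at a query point $y$. This is exactly $(D_{\widetilde w}a)(y)$. I allow general $\widetilde w$ to cover decoders that use different reconstruction weights. The final Linear$_{\mathrm{out}}$ can be absorbed either into $F_\theta$, since it acts pointwise and linearly and therefore commutes with the weighted sum over $g$, or into $D$.

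Composing the three stages gives $\widehat f = D_{\widetilde w}\circ F_\theta\circ E_w(f)$. The finite-dimensionality claim is immediate, because the output depends on $f$ only through the $G$ vectors $(E_w f)_g$.

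\textbf{Main obstacle.} The one subtle point is justifying that fixing $w_g$ is legitimate. In the actual architecture, $w_g$ depends on $\phi$, which in turn depends on the input. I will state explicitly that the proposition holds conditionally on the assignment functions, as the appendix preamble already stipulates. Under this convention, input dependence of $w$ only makes $E_w$ nonlinear in $f$ and does not affect the factorization. A second, smaller point is commuting Linear$_{\mathrm{out}}$ past the sum over $g$. This is valid because it is affine and the weights $\widetilde w_g$ sum to one; for general unnormalized $\widetilde w$, I would instead absorb the bias into $F_\theta$.
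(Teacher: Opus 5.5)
Your proposal is correct and follows essentially the same route as the paper. The paper also invokes Proposition~\ref{prop:metric_aggregation_limit} to identify the slice aggregation with $E_w$, inserts the backbone as $F_\theta$, reads de-slicing as $D_{\widetilde w}$, and composes the three maps. Your extra bookkeeping about $\mathrm{Linear}_{\mathrm{out}}$ is not needed, since the paper's layer ends at de-slicing. Your aside that, for unnormalized $\widetilde w$, the bias can be absorbed into $F_\theta$ is not right in general: that requires constant functions to lie in the span of the $\widetilde w_g$. It does no harm here, because METRO's reconstruction weights are softmax weights that sum to one.
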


\begin{proof}
By Proposition~\ref{prop:metric_aggregation_limit}, the empirical slice aggregation converges componentwise to the continuous metric-weighted coefficient \((E_w f)_g\). The latent backbone maps the resulting coefficients through \(F_\theta\), and reconstruction applies the weighted decoder \(D_{\widetilde w}\). Composing these three maps gives the stated form.
\end{proof}

If \(F_\theta\) is nonlinear, this is generally not a linear integral operator with an input-independent kernel. Moreover, if the assignment functions themselves depend on the input field, the resulting operator is nonlinear even when \(F_\theta\) is linear.

\begin{proposition}[Linear special case as a finite-rank integral operator]
\label{prop:metric_finite_rank_kernel}
Assume the metric assignments are fixed and the latent processor is linear. In the scalar-channel case, let
\[
    [F_\theta(z)]_g
    =
    \sum_{h=1}^G B_{gh}z_h.
\]
Then METRO reduces to the finite-rank integral operator
\[
    \widehat f(y)
    =
    \int_\Omega K_B(y,\xi)f(\xi)\,d\nu(\xi),
\]
with kernel
\[
    K_B(y,\xi)
    =
    \sum_{g=1}^G\sum_{h=1}^G
    \widetilde w_g(y)B_{gh}
    \frac{
        w_h(\xi)
    }{
        \int_\Omega w_h(s)\,d\nu(s)
    }.
\]
For vector-valued channels, the same formula holds with \(B_{gh}\) replaced by the corresponding channel-mixing matrix.
\end{proposition}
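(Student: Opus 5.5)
The proof will be a direct substitution using the encoder--processor--decoder form of Proposition~\ref{prop:metric_epd}. With the assignments $w_h$ and reconstruction weights $\widetilde w_g$ held fixed, Proposition~\ref{prop:metric_epd} gives $\widehat f = D_{\widetilde w}\circ F_\theta\circ E_w(f)$. In the scalar-channel case the encoder produces the latent coefficients
\[
z_h = (E_w f)_h = \frac{1}{\alpha_h}\int_\Omega w_h(\xi) f(\xi)\,d\nu(\xi),
\qquad
\alpha_h = \int_\Omega w_h(s)\,d\nu(s) > 0.
\]
The normalizer $\alpha_h$ is finite and positive by the mass assumption already used in Proposition~\ref{prop:metric_aggregation_limit}.

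Applying the linear processor gives $[F_\theta(z)]_g = \sum_h B_{gh} z_h$. The decoder then gives
\[
\widehat f(y) = \sum_g \widetilde w_g(y)\sum_h B_{gh}\,\frac{1}{\alpha_h}\int_\Omega w_h(\xi)f(\xi)\,d\nu(\xi).
\]
The only step that needs justification is moving the finite double sum inside the integral. This follows from linearity of the Lebesgue integral. Each integrand $w_h f$ is integrable because $w_h$ is bounded and $f\in L^1(\nu)$, and every coefficient $\widetilde w_g(y)B_{gh}/\alpha_h$ is a finite constant once $y$ is fixed. After the exchange, the integrand is exactly $K_B(y,\xi)f(\xi)$, which yields the stated kernel.

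Finite rank follows because $K_B(y,\xi)=\sum_{g,h} a_g(y)\,b_h(\xi)$, a sum of at most $G^2$ separable terms with $a_g=\widetilde w_g$ and $b_h = B_{gh} w_h/\alpha_h$. Regrouping as $\sum_g \widetilde w_g(y)\,\bigl(\sum_h B_{gh} w_h(\xi)/\alpha_h\bigr)$ shows the rank is in fact at most $G$. Consequently the range of the operator lies in $\mathrm{span}\{\widetilde w_1,\dots,\widetilde w_G\}$.

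For vector-valued channels, the same computation goes through with $z_h\in\mathbb{R}^C$ and $B_{gh}\in\mathbb{R}^{C'\times C}$. Matrix multiplication commutes with the componentwise integral, so $K_B$ becomes matrix-valued with the same formula.

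I do not expect a real obstacle here. The only care points are confirming that the integrals are finite and that $\alpha_h>0$, so that the kernel is well defined and measurable in $\xi$ for each $y$.
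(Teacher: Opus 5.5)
Your proposal is correct and follows the paper's proof. Like the paper, you substitute the linear processor into the encoder--processor--decoder form of Proposition~\ref{prop:metric_epd} and then exchange the finite sums with the integral. Your extra remarks on integrability, on $\alpha_h>0$, and on the rank bound (at most $G$) go slightly beyond the paper's argument and are sound; the one small slip is that the separable factor depends on both indices, so it should be written $b_{gh}$ rather than $b_h$.
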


\begin{proof}
Substituting the linear form of \(F_\theta\) into Proposition~\ref{prop:metric_epd} gives
\[
    \widehat f(y)
    =
    \sum_{g=1}^G \widetilde w_g(y)
    \sum_{h=1}^G B_{gh}
    \frac{
        \int_\Omega w_h(\xi)f(\xi)\,d\nu(\xi)
    }{
        \int_\Omega w_h(s)\,d\nu(s)
    }.
\]
Since the sums are finite, we exchange summation and integration to obtain the stated kernel. The vector-valued case is identical with matrix-valued channel maps.
\end{proof}

This finite-rank kernel is a linear special case of the metric-routing formulation, not an assumption on the full nonlinear METRO model.

\begin{proposition}[Role of the log-determinant term]
\label{prop:logdet_metric_volume}
For the metric logit
\[
    \ell_g(\xi)
    =
    -\frac12\|\phi(\xi)-\mu_g\|_{P_g}^2
    +\frac12\log\det P_g,
\]
we have
\[
    \exp(\ell_g(\xi))
    =
    \det(P_g)^{1/2}
    \exp\!\left(
        -\frac12\|\phi(\xi)-\mu_g\|_{P_g}^2
    \right).
\]
Thus, before normalization, the log-determinant term contributes the standard latent Gaussian volume factor associated with the precision \(P_g\).
\end{proposition}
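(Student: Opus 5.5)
The identity follows by exponentiating the metric logit and applying elementary properties of the exponential and logarithm; no earlier result is needed. We start from the definition
\[
    \ell_g(\xi)=-\tfrac12\|\phi(\xi)-\mu_g\|_{P_g}^2+\tfrac12\log\det P_g .
\]
Since $P_g\succ0$, all its eigenvalues are positive, so $\det P_g>0$ and $\log\det P_g$ is a well-defined real number. In the diagonal parameterization this is immediate: $\det P_g=\prod_k p_{g,k}$ with each $p_{g,k}\ge\varepsilon>0$, and $\log\det P_g=\sum_k\log p_{g,k}$, which is the term computed in Algorithm~\ref{alg:m_transolver}. This positivity check is the only point that needs care.

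Next, we use $\exp(a+b)=\exp(a)\exp(b)$ to split $\exp(\ell_g(\xi))$ into two factors. The second factor simplifies as
\[
    \exp\!\bigl(\tfrac12\log\det P_g\bigr)=\bigl(\exp(\log\det P_g)\bigr)^{1/2}=\det(P_g)^{1/2},
\]
using the positive square root, which is valid because $\det P_g>0$. Multiplying the two factors gives exactly the displayed formula.

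For the interpretive claim, we recall that the $D$-dimensional Gaussian density with mean $\mu_g$ and precision $P_g$ is
\[
    (2\pi)^{-D/2}\det(P_g)^{1/2}\exp\!\bigl(-\tfrac12\|x-\mu_g\|_{P_g}^2\bigr).
\]
Hence $\exp(\ell_g(\xi))$ equals this density evaluated at $x=\phi(\xi)$, multiplied by $(2\pi)^{D/2}$. That factor is the same for every slice $g$, so it cancels under the softmax over $g$. Therefore, before normalization, the log-determinant term contributes precisely the Gaussian volume factor $\det(P_g)^{1/2}$, which is the inverse of the latent ellipsoid volume up to constants.

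There is no genuine obstacle. The only point needing explicit mention is that $P_g\succ0$ guarantees the logarithm and the square root are well defined.
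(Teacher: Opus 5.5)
Your proposal is correct and follows the paper's proof: exponentiate $\ell_g$, split the exponential of a sum into a product, and use $\exp(\tfrac12\log\det P_g)=\det(P_g)^{1/2}$. The positivity check on $\det P_g$ and the remark that the $(2\pi)^{D/2}$ factor cancels under the softmax are correct additions that the paper leaves implicit.
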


\begin{proof}
Exponentiating the definition of \(\ell_g\) gives
\[
    \exp(\ell_g(\xi))
    =
    \exp\!\left(
        -\frac12\|\phi(\xi)-\mu_g\|_{P_g}^2
    \right)
    \exp\!\left(
        \frac12\log\det P_g
    \right).
\]
Since \(\exp(\frac12\log\det P_g)=\det(P_g)^{1/2}\), the identity follows.
\end{proof}

Under softmax routing across slices, this term changes the relative competition among slices and therefore affects the row-wise assignments \(w_g(\xi)\). For independently normalized non-softmax slice aggregation, however, multiplying a single slice weight \(w_g\) by a positive constant does not change the normalized slice coefficient in Proposition~\ref{prop:metric_aggregation_limit}, because the same constant appears in both numerator and denominator. Therefore, the log-determinant term should be understood primarily as a cross-slice competition and latent-volume factor in the default METRO softmax routing. It should not be interpreted as a physical-domain Jacobian correction unless the encoder has additional structure, such as \(D=m\) and a local diffeomorphism.

\begin{proposition}[Local pullback geometry of metric routing]
\label{prop:metric_pullback_geometry}
Let \(\phi:\mathbb{R}^m\to\mathbb{R}^D\) be twice continuously differentiable and let \(P\succ0\). Define
\[
    q(x)
    =
    \|\phi(x)-\mu\|_P^2.
\]
Then
\[
    \nabla^2 q(x_0)
    =
    2J_\phi(x_0)^\top PJ_\phi(x_0)
    +
    2\sum_{k=1}^D
    [P(\phi(x_0)-\mu)]_k
    \nabla^2\phi_k(x_0).
\]
In particular, if \(\phi(x_0)=\mu\), then
\[
    \nabla^2 q(x_0)
    =
    2J_\phi(x_0)^\top PJ_\phi(x_0).
\]
Thus, near a slice center and under local linearization of the encoder, the physical shape of a latent Mahalanobis slice is governed by the pullback quadratic form \(J_\phi^\top PJ_\phi\).
\end{proposition}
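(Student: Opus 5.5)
The plan is a direct second-order chain-rule computation, essentially the same one behind Proposition~\ref{prop:orientation of pullback}, now for a general symmetric $P\succ0$ rather than a diagonal one. Set $r(x)=\phi(x)-\mu$, so $q(x)=r(x)^\top P r(x)=\sum_{j,k}P_{jk}r_j(x)r_k(x)$. Since $\phi$ is $C^2$, so is $r$, and $J_r=J_\phi$, $\nabla^2 r_k=\nabla^2\phi_k$. The quadratic map $y\mapsto y^\top P y$ is smooth, so $q$ is $C^2$ and its Hessian is symmetric and given by ordinary product-rule differentiation.

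First I would compute the gradient. Differentiating $\sum_{j,k}P_{jk}r_jr_k$ and using the symmetry $P_{jk}=P_{kj}$ gives
\[
\nabla q(x)=2\sum_{k=1}^D [Pr(x)]_k\,\nabla r_k(x)=2J_\phi(x)^\top P r(x).
\]
Next I would differentiate once more in coordinates. Writing $\partial_a q=2\sum_{j,k}P_{jk}r_j\,\partial_a r_k$ and applying the product rule in $x_b$ produces two groups of terms:
\[
\partial_b\partial_a q=2\sum_{j,k}P_{jk}\,\partial_b r_j\,\partial_a r_k+2\sum_{j,k}P_{jk}\,r_j\,\partial_b\partial_a r_k .
\]
The first sum is the $(a,b)$ entry of $2J_\phi^\top PJ_\phi$, again using symmetry of $P$ to fix the index order. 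The second sum is $2\sum_k [Pr]_k\,\partial_a\partial_b\phi_k$. Evaluating at $x_0$ gives the stated formula.

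The special case follows at once. If $\phi(x_0)=\mu$, then $r(x_0)=0$, so $Pr(x_0)=0$ and the curvature term vanishes, leaving $2J_\phi(x_0)^\top PJ_\phi(x_0)$.

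For the interpretive sentence I would add a short justification. At such a point the gradient is also zero, so Taylor's theorem (valid under $C^2$) gives
\[
q(x_0+h)=h^\top J_\phi^\top PJ_\phi\, h+o(\|h\|^2).
\]
Near the center, the sublevel sets of $q$, and hence the level sets of the routing weight $\exp(-q/2)$, are therefore governed to second order by the pullback form $J_\phi^\top PJ_\phi$. That form is positive semidefinite, and positive definite when $J_\phi(x_0)$ has full column rank.

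The main obstacle is only bookkeeping: getting the factor $2$ and the index placement right, which depends on using the symmetry of $P$. If $P$ were not symmetric, one would have to replace $P$ by $(P+P^\top)/2$. Since $P\succ0$ is taken here to include symmetry, no issue arises.
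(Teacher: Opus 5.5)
Your proposal is correct and follows the paper's own proof essentially step for step: set $r=\phi-\mu$, compute $\nabla q = 2J_\phi^\top P r$, differentiate again to get the pullback term $2J_\phi^\top P J_\phi$ plus the curvature term $2\sum_k [Pr]_k \nabla^2\phi_k$, and note that the curvature term vanishes when $r(x_0)=0$. The paper leaves implicit two things you make explicit: the coordinate bookkeeping that relies on the symmetry of $P$, and the second-order Taylor expansion $q(x_0+h)=h^\top J_\phi^\top P J_\phi h + o(\|h\|^2)$, which justifies the closing interpretive sentence.
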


\begin{proof}
Let \(r(x)=\phi(x)-\mu\), so \(q(x)=r(x)^\top P r(x)\). Differentiating once gives
\[
    \nabla q(x)
    =
    2J_\phi(x)^\top P r(x).
\]
Differentiating again gives
\[
    \nabla^2 q(x_0)
    =
    2J_\phi(x_0)^\top PJ_\phi(x_0)
    +
    2\sum_{k=1}^D
    [P r(x_0)]_k
    \nabla^2\phi_k(x_0).
\]
Substituting \(r(x_0)=0\) gives the centered expression.
\end{proof}

Since \(P\succ0\), the pullback form
\[
    Q_g(x)=J_\phi(x)^\top P_gJ_\phi(x)
\]
is positive semidefinite. If \(J_\phi(x)\) has full column rank, then \(Q_g(x)\) is positive definite and defines a local Riemannian metric. If \(J_\phi(x)\) is rank deficient, \(Q_g(x)\) is a positive semidefinite pullback quadratic form, i.e. a local pseudometric. This explains how diagonal latent precision matrices can induce locally oriented anisotropic receptive fields in physical coordinates, while making clear that the statement is local and conditional on the encoder geometry.

% \textbf{Conclusion.}
% Our implementation (Case 3) achieves strict asymptotic equivalence to the standard linear attention (Case 1). Even in the theoretically richer Case 2, the complexity remains linear with respect to the mesh size $N$, ensuring the model scales efficiently to massive physical simulations.
% \end{proof}
\section{Computational Complexity Analysis}

A critical requirement for neural PDE solvers is scalability. Here, we formally prove that the Anisotropic Physics-Attention mechanism maintains linear complexity with respect to the input resolution $N$, regardless of whether a full or diagonal precision matrix is employed.

\begin{proposition}[Asymptotic Complexity]
Let $N$ be the number of mesh points, $G$ be the number of latent slices, and $D$ be the dimension of the feature space. The computational complexity of the Anisotropic Physics-Attention is $\mathcal{O}(N \cdot G \cdot D^k)$, where $k=1$ for the diagonal implementation and $k=2$ for the full matrix formulation. In both cases, the complexity remains linear with respect to the resolution $N$.
\end{proposition}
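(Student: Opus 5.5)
The plan is to count floating-point operations stage by stage, following Algorithm~\ref{alg:m_transolver}, and then sum the costs. I treat $G$, $D$, the value width $C$, and the input width $D_{\mathrm{in}}$ as fixed architectural constants independent of $N$. The stages are: projection, metric logit computation, softmax, slice aggregation, latent processing, and de-slicing.

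For the projections $\mathrm{Linear}_{\mathrm{geo}}$ and $\mathrm{Linear}_{\mathrm{val}}$, the cost is $\mathcal{O}(N D_{\mathrm{in}}(D+C))$. The precision parameterization $\mathrm{softplus}(\mathbf{P}_{\mathrm{raw}})+\epsilon_{\mathrm{pd}}$ is computed once per layer. It costs $\mathcal{O}(GD)$ in the diagonal case and $\mathcal{O}(GD^2)$ in the full case. In the full case I assume a Cholesky factor $\mathbf{P}_g=\mathbf{L}_g\mathbf{L}_g^\top$ is the learned parameter, so that positive definiteness comes from the parameterization.

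The log-determinant is also independent of $N$. In the diagonal case it is $\sum_k \log P_{g,k}$, costing $\mathcal{O}(GD)$. In the full case it is $2\sum_k\log (L_g)_{kk}$; without a Cholesky parameterization one would pay $\mathcal{O}(GD^3)$ for a factorization, which is still independent of $N$.

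The dominant term is the metric logit $L_{n,g}$.
\begin{itemize}
\item \emph{Diagonal case.} The quantity $d^2_{n,g}=\sum_k P_{g,k}(X_{\mathrm{geo},n,k}-\mu_{g,k})^2$ costs $\mathcal{O}(D)$ for each pair $(n,g)$, so $\mathcal{O}(NGD)$ in total.
\item \emph{Full case.} Forming $\mathbf{r}=\mathbf{x}_n-\boldsymbol{\mu}_g$ costs $\mathcal{O}(D)$. The matrix-vector product $\mathbf{P}_g\mathbf{r}$, or equivalently $\mathbf{L}_g^\top\mathbf{r}$ followed by a squared norm, costs $\mathcal{O}(D^2)$. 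Summing over all pairs gives $\mathcal{O}(NGD^2)$.
\end{itemize}
One could note the expansion $\mathbf{x}^\top\mathbf{P}\mathbf{x}-2\mathbf{x}^\top\mathbf{P}\boldsymbol{\mu}+\boldsymbol{\mu}^\top\mathbf{P}\boldsymbol{\mu}$ to batch this as matrix multiplications. That changes constants only, not the $D^k$ dependence.

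The remaining stages are as follows.
\begin{itemize}
\item The row-wise softmax over $g$ costs $\mathcal{O}(NG)$.
\item The slice masses $m_g$ cost $\mathcal{O}(NG)$.
\item The aggregation $(\mathbf{A}^{\mathrm{slice}})^\top\mathbf{V}$ costs $\mathcal{O}(NGC)$.
\item The latent processor acts on $G$ tokens only. For a Transformer it costs $\mathcal{O}(G^2C+GC^2)$, and for Mamba it costs $\mathcal{O}(GC\cdot\text{state})$. Either way it is independent of $N$.
\item The broadcast $\mathbf{A}^{\mathrm{row}}\mathbf{T}_{\mathrm{out}}$ costs $\mathcal{O}(NGC)$.
\item The output linear map costs $\mathcal{O}(NCD_{\mathrm{out}})$.
\end{itemize}

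Summing all stages, the total cost is $\mathcal{O}(NGD^k)$ plus terms that are either linear in $N$ with constant coefficients or independent of $N$. This needs the convention $C=\mathcal{O}(D^k)$ and $D_{\mathrm{in}}=\mathcal{O}(GD^{k})$, which holds in practice since $C\sim D$.

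The main obstacle is the bookkeeping needed to state the bound as exactly $\mathcal{O}(NGD^k)$. The projection and aggregation terms $\mathcal{O}(ND_{\mathrm{in}}D)$ and $\mathcal{O}(NGC)$ are not literally dominated by $NGD^k$ without a width assumption. I would handle this by stating that $C$, $D_{\mathrm{in}}$ and $D_{\mathrm{out}}$ are $\mathcal{O}(D)$, and that the projection cost is shared with the isotropic baseline. Linearity in $N$ does not depend on this assumption: every $N$-dependent term has a coefficient that is a polynomial in the fixed constants. The conclusion therefore holds for both $k=1$ and $k=2$, and the backward pass only multiplies the cost by a constant factor by standard reverse-mode arguments.
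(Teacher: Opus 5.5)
Your proposal is correct and follows the paper's argument. The paper likewise counts only the per-pair scoring cost, $\mathcal{O}(D)$ for the diagonal logit versus $\mathcal{O}(D^2)$ for $\mathbf{P}_g(\mathbf{x}_n-\boldsymbol{\mu}_g)$, and notes that the log-determinant (at most $\mathcal{O}(GD^3)$ naively) does not depend on $N$; it simply asserts that scoring dominates instead of doing your stage-by-stage bookkeeping. One correction in that extra bookkeeping: absorbing the projection cost $\mathcal{O}(ND_{\mathrm{in}}(D+C))$ into $\mathcal{O}(NGD^k)$ requires $D_{\mathrm{in}}(D+C)=\mathcal{O}(GD^k)$, not $D_{\mathrm{in}}=\mathcal{O}(GD^k)$, and even with all widths of order $D$ the $\mathcal{O}(ND^2)$ linear maps are absorbed only when $k=2$ or $D=\mathcal{O}(G)$, so your fallback of treating them as costs shared with the isotropic baseline (the paper's implicit convention) is the right way to state the result.
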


\begin{proof}
The computational cost is dominated by the scoring function between latent features $\mathbf{z}_j \in \mathbb{R}^D$ and slice centers $\boldsymbol{\mu}_g \in \mathbb{R}^D$. We analyze the complexity for the general and optimized cases.

\textbf{1. Baseline: Standard Linear Attention.}
The standard Transolver employs a dot-product attention mechanism ($\mathbf{q}^\top \mathbf{k}$). For feature vectors of dimension $D$, this requires $D$ multiplications and summations per pair.
\begin{equation}
    \mathcal{C}_{\text{linear}} = \mathcal{O}(N \cdot G \cdot D).
\end{equation}

% \textbf{2. Theoretical Generalization: Full Anisotropy.}
% In the general case, the Mahalanobis distance is computed using a dense precision matrix $\mathbf{P}_g \in \mathbb{R}^{D \times D}$, capturing correlations between all feature dimensions.
% \begin{equation}
%     \text{Score}_{\text{full}} = -\frac{1}{2}(\mathbf{z}_j - \boldsymbol{\mu}_g)^\top \mathbf{P}_g (\mathbf{z}_j - \boldsymbol{\mu}_g) + \frac{1}{2} \log \det(\mathbf{P}_g).
% \end{equation}
% \begin{itemize}
%     \item The matrix-vector multiplication $\mathbf{P}_g (\mathbf{z}_j - \boldsymbol{\mu}_g)$ requires $D^2$ operations.
%     \item Total Complexity: $\mathcal{O}(N \cdot G \cdot D^2)$.
% \end{itemize}
% \textit{Remark:} For tasks operating directly on physical coordinates ($D \in \{2, 3\}$), $D^2$ is a negligible constant, making the full formulation practically as fast as the linear baseline.
\textbf{2. Theoretical Generalization: Full Anisotropy.}
In the general case, the Mahalanobis distance is computed using a dense precision matrix \(P_g\in\mathbb{R}^{D\times D}\), capturing correlations between all feature dimensions:
\[
    \mathrm{Score}_{\mathrm{full}}
    =
    -\frac12(z_j-\mu_g)^\top P_g(z_j-\mu_g)
    +\frac12\log\det(P_g).
\]
The matrix-vector multiplication \(P_g(z_j-\mu_g)\) requires \(O(D^2)\) operations per point-slice pair, giving \(O(NGD^2)\) pairwise scoring cost. Computing or parameterizing \(\log\det(P_g)\) may add an additional per-slice cost, e.g. \(O(GD^3)\) for a naive dense determinant, but this term is independent of the mesh resolution \(N\). Therefore, the full formulation remains linear with respect to \(N\), with dominant pairwise scoring cost \(O(NGD^2)\) in the large-mesh regime.

\textit{Remark:}
For tasks operating directly on low-dimensional physical coordinates (\(D\in\{2,3\}\)), the \(D^2\) pairwise cost and dense log-determinant cost are small constants. In this setting, the full formulation is practically close to the linear baseline, although our high-dimensional latent implementation uses the diagonal formulation for efficiency and stability.

\textbf{3. Efficient Implementation: Diagonal Anisotropy.}
To scale our method to high-dimensional latent spaces (e.g., $D=64$ or $128$), we constrain $\mathbf{P}_g$ to be diagonal, $\mathbf{P}_g = \text{diag}(p_{g,1}, \dots, p_{g,D})$. This models axis-aligned scaling without cross-correlations.
\begin{equation}
    \text{Score}_{\text{diag}} = \sum_{k=1}^D -\frac{1}{2} p_{g,k} (z_{j,k} - \mu_{g,k})^2 + \frac{1}{2} \log(p_{g,k}).
\end{equation}
\begin{itemize}
    \item The term $p_{g,k} (z_{j,k} - \mu_{g,k})^2$ involves element-wise operations only.
    \item This reduces the per-pair cost from $\mathcal{O}(D^2)$ to $\mathcal{O}(D)$.
    \item Total Complexity: $\mathcal{O}(N \cdot G \cdot D)$.
\end{itemize}

\textbf{Conclusion.}
Our implementation (Case 3) achieves strict asymptotic equivalence to the standard linear attention (Case 1). Even in the theoretically richer Case 2, the complexity remains linear with respect to the mesh size $N$, ensuring the model scales efficiently to massive physical simulations.
\end{proof}

\subsection{Parameter-Matched Efficiency.}
To verify that METRO's improvements stem from superior routing rather than added capacity, we evaluated a strict parameter-matched setting. The standard Transolver ($G=64$ slices) contains $\approx 2.82$M parameters and achieves an Elasticity error of $0.0064$ at $\sim 10$s/epoch. By increasing Transolver to $G=96$ slices and reducing M-Transolver to $G=16$ slices, both models are matched at $\approx 2.83$M parameters. Under these constraints, M-Transolver ($0.0048$ error, $\sim 8$s/epoch) simultaneously outperforms the baseline ($0.0061$ error, $\sim 13$s/epoch) in both accuracy and computational speed. The anisotropic routing efficiently covers the physical domain using a fraction of the latent tokens required by isotropic routing.

\begin{table}[t]
\caption{Summary of experiment benchmarks. The first six datasets are from FNO \citep{lifourier} and geo-FNO \citep{li2023fourier}, AirfRANS is from \citep{bonnet2022airfrans}, and AirCraft is a high-fidelity industrial dataset. \#Mesh records the size of discretized meshes. \#Dataset is organized as the number of samples in training and test sets.}
\label{tab:benchmarks}
\begin{center}
\begin{small}
\begin{sc}
\resizebox{\textwidth}{!}{
\begin{tabular}{llccccc}
\toprule
\textbf{Geometry} & \textbf{Benchmarks} & \textbf{\#Dim} & \textbf{\#Mesh} & \textbf{\#Input} & \textbf{\#Output} & \textbf{\#Dataset} \\
\midrule
Point Cloud & Elasticity & 2D & 972 & Structure & Inner Stress & (1000, 200) \\
\midrule
\multirow{3}{*}{Structured Mesh} & Plasticity & 2D+Time & 3,131 & External Force & Mesh Displacement & (900, 80) \\
 & Airfoil & 2D & 11,271 & Structure & Mach Number & (1000, 200) \\
 & Pipe & 2D & 16,641 & Structure & Fluid Velocity & (1000, 200) \\
\midrule
\multirow{2}{*}{Regular Grid} & Navier-Stokes & 2D+Time & 4,096 & Past Velocity & Future Velocity & (1000, 200) \\
 & Darcy & 2D & 7,225 & Porous Medium & Fluid Pressure & (1000, 200) \\
\midrule
\multirow{3}{*}{Unstructured Mesh} & AirfRANS & 2D & 32,000 & Structure & Velocity \& Pressure & (800, 200) \\
 & ShapeNet-Car & 3D & 32,186 & Structure & Velocity \& Pressure & (789, 100) \\
 & AirCraft & 3D & $\sim$300,000 & Structure & 6 Quantities & (110, 10) \\
\bottomrule
\end{tabular}
}
\end{sc}
\end{small}
\end{center}
\end{table}

\section{Latent versus Physical Partition of Unity}
\label{sec:pou_comparison}

Our proposed METRO framework fundamentally operates by constructing a partition of unity over the \textit{latent} feature space $\mathcal{Z}$. This algebraic domain of operation yields a distinct representational advantage over methods that construct localized weighting functions directly in the physical domain $\Omega$.

\textbf{The Constraint of Physical Contiguity.} Standard physical partition of unity methods define weighting functions $\{\psi_k(\xi)\}_{k=1}^K$ over spatial coordinates $\xi \in \Omega$. To maintain operator locality, the support of each function, $\text{supp}(\psi_k)$, is bounded to a contiguous spatial neighborhood defined by a bandwidth $\epsilon$. Algebraically, if two points $\xi_1$ and $\xi_2$ are separated by a Euclidean distance $\|\xi_1 - \xi_2\| > \epsilon$, the condition $\psi_k(\xi_1) > 0 \implies \psi_k(\xi_2) = 0$ is rigidly enforced. Consequently, a physical partition of unity cannot aggregate spatially separated regions exhibiting analogous physical phenomena without expanding its bandwidth to encompass the intervening, potentially dissimilar domain. This spatial constraint forces the network to allocate separate, redundant clusters for repeating physical states across the mesh.

\textbf{Latent Partitioning and Semantic Grouping.} METRO bypasses this limitation by computing the assignment score via the Mahalanobis metric over the learned embeddings $x_n = \phi(\xi_n)$ (Eq. 2). 

To illustrate this, consider the aerodynamic simulation of a vehicle. A physical partition of unity method tasked with resolving localized turbulence, such as the flow separating around the side mirrors, typically requires independent, redundant clusters for the left and right mirrors. Because the physical distance between the mirrors is much larger than the local flow features, a contiguous spatial kernel cannot group them together without also absorbing the unrelated flow fields in between, such as the laminar flow over the windshield.

In contrast, METRO's non-linear encoder $\phi: \Omega \to \mathcal{Z}$ projects regions governed by analogous local dynamics to proximal coordinates in the latent space. Because the flow behavior around both mirrors shares identical semantic features, they are embedded into the same neighborhood in $\mathcal{Z}$. Consequently, a single latent ellipsoid parameterized by $P_g$ encapsulates the features of both structures simultaneously. When projected back to the physical domain via the preimage $\phi^{-1}$, this single latent slice naturally manifests as a union of spatially disjoint subsets, accurately resolving both regions.

This property ensures that a single task-specialized metric $P_g$ can process spatially separated but physically identical structures simultaneously. As illustrated in Figure \ref{fig:pou_comparison_vis}, METRO efficiently aggregates repeating semantic features across the domain, achieving global feature localization while eliminating the parameter redundancy inherent to physically contiguous formulations.

\begin{figure}[htbp]
    \centering
    % Adjust the width parameter (e.g., 0.8\textwidth) as needed for your specific layout
    \includegraphics[width=\textwidth]{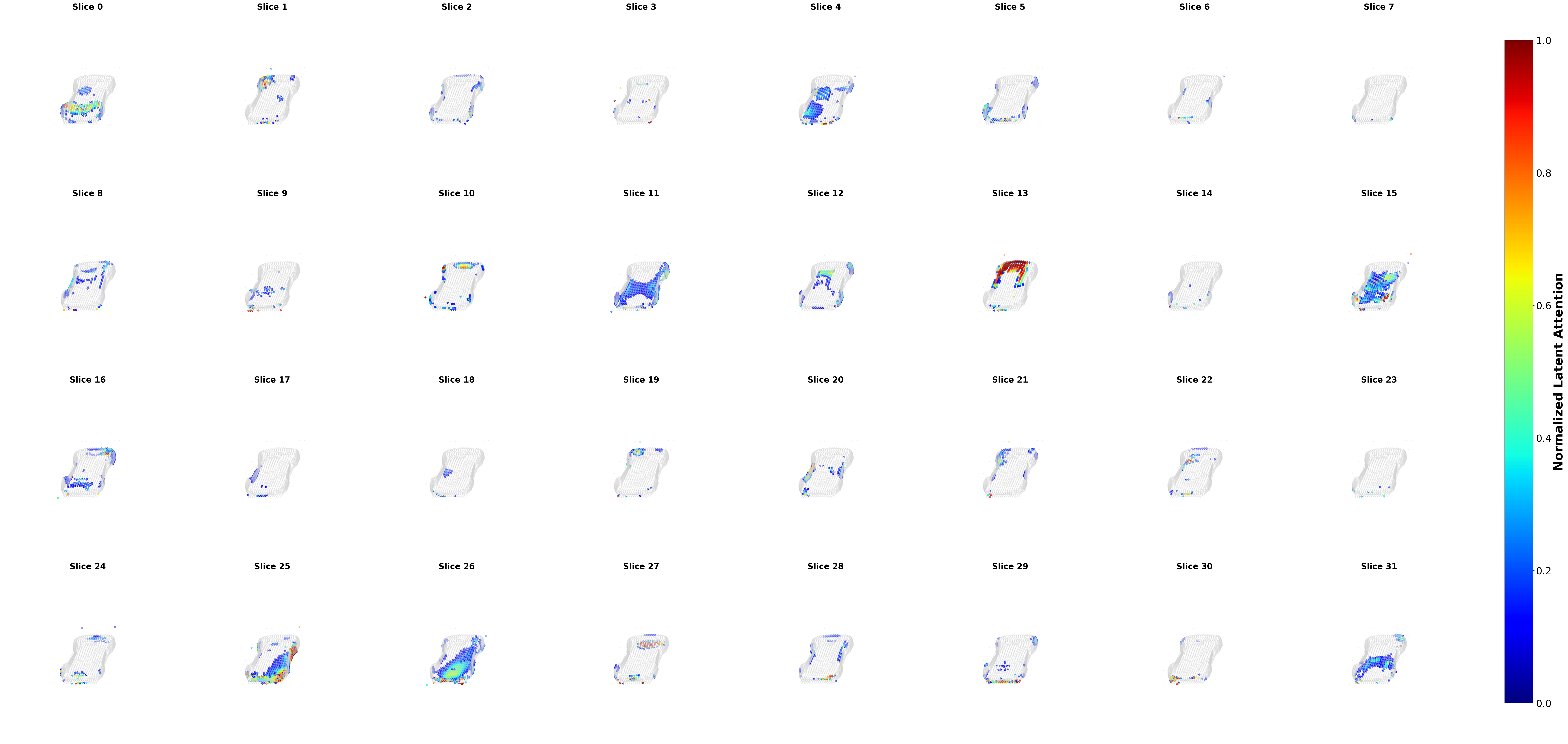}
    \caption{Latent versus physical domain partitioning in vehicle aerodynamics. While standard physical partition of unity methods must allocate independent, redundant spatial clusters to capture spatially disconnected structures—such as the separated flow around bilateral side mirrors—METRO's latent Mahalanobis routing embeds these semantically identical regions into the same latent neighborhood. When projected back to the physical domain via the preimage $\phi^{-1}$, a single latent slice naturally encompasses both mirrors. This demonstrates METRO's ability to achieve efficient global feature localization without being constrained by physical contiguity.}
    \label{fig:pou_comparison_vis}
\end{figure}

\section{Benchmarks}
\label{sec:benchmarks}
To evaluate the effectiveness and generalization capabilities of our model, we conduct experiments across nine distinct benchmarks. These datasets encompass a wide spectrum of physical phenomena, ranging from solid mechanics (Elasticity, Plasticity) to fluid dynamics (Airfoil, Pipe, Navier-Stokes, Darcy, AirfRANS, AirCraft, and ShapeNet-Car). They vary significantly in geometry, dimensionality, and resolution, spanning both standard academic baselines and complex industrial-grade simulations.

\subsection{Solid Mechanics}
\textbf{Elasticity.} This benchmark focuses on estimating the internal stress distribution within hyper-elastic materials governed by the Rivlin-Saunders material model. Each case simulates a unit cell containing a central void, clamped at the bottom and subjected to tension at the top. The geometry is discretized into 972 points. The input is a tensor of shape $972 \times 2$ containing the 2D spatial coordinates, and the target output is a $972 \times 1$ tensor representing the von Mises stress at each point. The dataset comprises 1,000 samples for training and 200 samples for testing.

\textbf{Plasticity.} This task involves predicting the deformation dynamics of a plastic material subjected to impact. The simulation models a scenario where a rigid, frictionless die of arbitrary shape strikes a block at a constant velocity, described by an elastoplastic constitutive model. The input is the geometry of the die, discretized into a $101 \times 31$ mesh. The objective is to predict the material deformation over 20 future time steps, resulting in an output tensor of shape $20 \times 101 \times 31 \times 4$ (representing deformation in four directions). The dataset includes 900 samples for training and 80 for testing.

\subsection{Fluid Dynamics}
\textbf{Airfoil.} Designed to test the modeling of transonic flow, this benchmark utilizes the Euler equations to simulate flow over an airfoil. The domain is discretized into a structured mesh of $221 \times 51$, with shapes derived from deformations of the baseline NACA-0012 airfoil. The model takes the mesh node positions as input and predicts the Mach number field for each mesh point. The dataset consists of 1,000 training samples and 200 testing samples.

\textbf{Pipe.} This benchmark simulates incompressible fluid flow through a pipe channel, governed by the Navier-Stokes equations with a fixed viscosity of $\nu = 0.005$. The geometry involves a pipe of length 10 and width 1, with varying centerline shapes controlled by cubic polynomials. The fluid field is represented by a $129 \times 129$ structured mesh. The input is a tensor of shape $129 \times 129 \times 2$ (spatial coordinates), and the goal is to predict the horizontal velocity component ($129 \times 129 \times 1$). The split is 1,000 samples for training and 200 for testing.

\textbf{Navier-Stokes.} This classic benchmark models viscous, incompressible flow on a 2D unit torus $(0,1)^2$ using the vorticity formulation. With a fixed viscosity of $10^{-5}$, the data is discretized on a $64 \times 64$ regular grid. The task is a temporal forecasting problem: given the fluid velocity field for the past 10 time steps, the model must predict the velocity for the subsequent 10 steps. The dataset provides 1,000 simulations with varied initial conditions for training and 200 for testing. 

\textbf{Darcy Flow.} This dataset simulates steady-state fluid flow through a porous medium on a unit square, governed by Darcy's law (a second-order elliptic equation). The input is the permeability field representing the porous structure, and the output is the resulting fluid pressure field. While originally generated on a $421 \times 421$ grid, it is downsampled to $85 \times 85$ for experimentation. The dataset contains 1,000 training samples and 200 testing samples. 

\subsection{Industrial Applications}
\textbf{AirfRANS.} Providing higher fidelity than the standard Airfoil benchmark, AirfRANS utilizes Reynolds-Averaged Navier–Stokes (RANS) simulations. It includes airfoils from the NACA 4-digit and 5-digit series, discretized into finer meshes of approximately 32,000 points. The dataset captures variations in airfoil shape, angle of attack, and Reynolds number. The primary evaluation targets are the pressure quantities on the volume and surface, as well as the lift coefficients. The dataset includes 800 training samples and 200 testing samples.

\textbf{AirCraft.} This benchmark represents a significant increase in complexity, involving 3D high-fidelity CFD simulations of over 30 distinct aircraft designs. Simulations cover diverse flow conditions, including variations in Mach number and angle of attack. Unlike 2D benchmarks, each aircraft is discretized into a massive unstructured mesh of approximately 300,000 points, where each point is characterized by spatial coordinates $(x,y,z)$ and surface normals. The model predicts six key physical quantities: pressure coefficient $C_p$, fluid density $\rho$, velocity components ($U, V, W$), and pressure $p$. Due to dataset availability, we utilize 110 samples for training and 10 samples for testing.

\textbf{ShapeNet-Car.} Focused on automotive design, this benchmark estimates the drag coefficient for vehicles moving at 72 km/h. The dataset contains 889 samples derived from the ``car'' category of ShapeNet, with 789 used for training and 100 for testing. Each sample is discretized into an unstructured mesh of 32,186 points, where the input features combine mesh position, signed distance function (SDF), and surface normal vectors. The model is tasked with predicting the velocity and pressure fields at every point, which are subsequently used to calculate the vehicle's aerodynamic drag coefficient.

\subsection{Evaluation Metrics}

To comprehensively assess the performance of our model, we employ three distinct metrics. For all benchmarks, we utilize the standard Relative L2 Error to evaluate field reconstruction accuracy. Additionally, for the AirfRANS and ShapeNet-Car benchmarks, we introduce design-oriented metrics—specifically the error in Aerodynamic Coefficients (Lift and Drag) and Spearman's Rank Correlation—to evaluate the model's utility in aerodynamic optimization tasks.

\textbf{Relative L2 Error (Field).}
For all datasets (Elasticity, Plasticity, Airfoil, Pipe, Navier-Stokes, Darcy, AirfRANS, AirCraft, and ShapeNet-Car), we measure the accuracy of the predicted physical fields using the relative root mean square error. Given the ground truth field $u$ and the model prediction $\hat{u}$, the Relative L2 error (L2RE) is defined as:
\begin{equation}
\text{L2RE} = \frac{\| \hat{u} - u \|_2}{\| u \|_2},
\end{equation}
where $\| \cdot \|_2$ denotes the Euclidean norm. This metric provides a direct, normalized quantification of the deviation between the predicted physics states and the high-fidelity simulations.

\textbf{Relative L2 Error (Aerodynamic Coefficients).}
For industrial benchmarks, the utility of a solver is often determined by its ability to predict scalar quantities derived from the fields. We specifically focus on the Lift Coefficient ($C_L$) for AirfRANS and the Drag Coefficient ($C_D$) for ShapeNet-Car. These coefficients are computed by integrating the pressure $p$ and wall shear stress $\tau$ over the object surface $\partial \Omega$:
\begin{equation}
    C_{(\cdot)} = \frac{2}{\rho v^2 A} \int_{\partial \Omega} (p(\xi)\mathbf{n}(\xi) \cdot \mathbf{d} + \tau(\xi) \cdot \mathbf{d}) d\xi,
\end{equation}
where $\rho$ is the fluid density, $v$ is the inlet velocity, $A$ is the reference area, $\mathbf{n}$ is the surface normal, and $\mathbf{d}$ is the projection direction. For Lift ($C_L$), $\mathbf{d}$ aligns with the vertical axis (e.g., $(0,0,1)$); for Drag ($C_D$), $\mathbf{d}$ aligns with the flow direction. We calculate the relative error between the predicted coefficient $\hat{C}$ and the ground truth $C$:
\begin{equation}
\text{L2RE}_{coef} = \frac{\| \hat{C} - C \|_2}{\| C \|_2}.
\end{equation}

\textbf{Spearman's Rank Correlation.}
In industrial design optimization, the relative ranking of different geometric designs is often more critical than absolute numerical precision. To evaluate how well our model preserves the ordering of designs based on their aerodynamic performance, we utilize Spearman's rank correlation coefficient ($\rho$) on the AirfRANS (Lift) and ShapeNet-Car (Drag) test sets. Given $K$ test samples, we compare the rankings of the ground truth coefficients $R(C)$ against the rankings of the predicted coefficients $R(\hat{C})$:
\begin{equation}
\rho = \frac{\text{cov}(R(C), R(\hat{C}))}{\sigma_{R(C)} \sigma_{R(\hat{C})}},
\end{equation}
where $\text{cov}(\cdot)$ denotes covariance and $\sigma$ represents standard deviation. A higher correlation $\rho \in [-1, 1]$ indicates that the model correctly identifies better-performing designs (e.g., higher lift or lower drag), making it a reliable surrogate for design optimization loops.

\subsection{Implementations}
\label{sec:implementations}

As shown in Table \ref{tab:hyperparameters}, all baselines and our proposed models are trained and tested using consistent strategies to ensure fair comparison. 

\textbf{METRO Models Configuration.}
For our proposed METRO models (M-Transolver and M-LaMO), we adopt the same architectural hyperparameters as the original Transolver \citep{wu2024transolver} to ensure that performance gains are attributable to the learned Mahalanobis physics attention rather than increased model capacity. Specifically, the number of channels $D$ is set to 256 for high-dimensional or complex inputs (Navier-Stokes, AirfRANS, ShapeNet-Car, and AirCraft) and 128 for the remaining benchmarks. To balance computational efficiency with expressivity, we set the number of slices $G$ to 32 when $D=256$ and 64 when $D=128$. 

\textbf{Baselines and Reproduction Protocol.}
For the top three leading baselines (Transolver, LaMO, and Transolver++), we executed a comprehensive reproduction study following the strict grid search strategy and hyperparameter settings detailed in the Transolver paper. In all cases, we were able to reproduce the results within the reported standard deviation; consequently, we report the numbers from the original authors to maintain consistency with the literature. For other established baselines, we reuse the results directly from the Transolver, Transolver++, LNO, and LaMO papers.

\textbf{AirfRANS and ShapeNet-Car Settings.}
For the AirfRANS and ShapeNet-Car datasets, we note that the Transolver++ and LaMO papers did not originally report results on these benchmarks. To ensure a fair comparison, we trained these models (along with our METRO models) using the identical settings adopted by Transolver.

\textbf{AirCraft Settings and Baseline Adaptations.} For the AirCraft dataset, we conducted a grid search for all models, sweeping the number of layers in $\{2,4,6,8\}$, number of channels in $\{64,128,256,512\}$, and number of heads in $\{1,2,4,8\}$. We selected the best-performing configuration provided that the total memory consumption remained under a strict budget of 48GB.

\begin{itemize}
    \item \textit{Baselines Implementation:} We implemented GraphSAGE, PointNet, and Graph U-Net following the codebase of Transolver. For GNO, 3D-GeoCA, and GINO, we adopted the official codebases. All models were trained to jointly predict the six available physical fields: pressure coefficient ($C_p$), density ($\rho$), velocity components ($U, V, W$), and static pressure ($P$).

    \item \textit{Efficiency:} The Transolver family and our METRO variants proved highly efficient, utilizing less than 12GB of memory under optimal settings. Conversely, graph-based baselines such as GNO and MeshGraphNet exceeded the 48GB budget even with a reduced hidden size of 64; thus, for these specific baselines, we allowed the memory usage to exceed the 48GB limit to enable comparison.
\end{itemize}

The training configuration for AirCraft (loss functions, epochs, and optimizer) is identical to that of AirfRANS.

\begin{table}[H]
\caption{Training and model configurations for M-Transolver and M-LaMO. Our settings strictly follow the baselines (Wu et al., 2024) to ensure fair comparison. $L_{rL2}$ denotes the relative L2 loss, $L_v$ and $L_s$ represent volume and surface field losses, and $L_g$ is the spatial gradient regularization term. Note that AirCraft adopts the same training configuration as AirfRANS.}
\label{tab:hyperparameters}
\begin{center}
\begin{small}
\begin{sc}
\resizebox{\textwidth}{!}{
\begin{tabular}{lcccccc|cccc}
\toprule
 & \multicolumn{6}{c}{\textbf{Training Configuration}} & \multicolumn{4}{c}{\textbf{Model Configuration}} \\
\cmidrule(lr){2-7} \cmidrule(lr){8-11}
\textbf{Benchmarks} & \textbf{Loss} & \textbf{Epochs} & \textbf{Init LR} & \textbf{Optimizer} & \textbf{Batch} & \textbf{Scheduler} & \textbf{Layers} $L$ & \textbf{Heads} & \textbf{Channels} $D$ & \textbf{Slices} $G$ \\
\midrule
Elasticity & Relative L2 & 500 & $10^{-3}$ & AdamW & 1 & CosineAnnealing & 8 & 8 & 128 & 64 \\
Plasticity & Relative L2 & 500 & $10^{-3}$ & AdamW & 4 & OneCycle & 8 & 8 & 128 & 64 \\
Airfoil    & Relative L2 & 500 & $10^{-3}$ & AdamW & 4 & OneCycle & 8 & 8 & 128 & 64 \\
Pipe       & Relative L2 & 500 & $10^{-3}$ & AdamW & 4 & OneCycle & 8 & 8 & 128 & 64 \\
Navier-Stokes & Relative L2 & 500 & $10^{-3}$ & AdamW & 2 & OneCycle & 8 & 8 & 256 & 32 \\
Darcy      & $L_{rL2} + 0.1L_g$ & 500 & $10^{-3}$ & AdamW & 8 & OneCycle & 8 & 8 & 128 & 64 \\
\midrule
AirfRANS   & $L_v + L_s$ & 400 & $10^{-3}$ & Adam & 1 & OneCycle & 8 & 8 & 256 & 32 \\
AirCraft   & $L_v + L_s$ & 400 & $10^{-3}$ & Adam & 1 & OneCycle & 8 & 8 & 256 & 32 \\
ShapeNet-Car & $L_v + 0.5L_s$ & 200 & $10^{-3}$ & Adam & 1 & OneCycle & 8 & 8 & 256 & 32 \\
\bottomrule
\end{tabular}
}
\end{sc}
\end{small}
\end{center}
\end{table}

\subsection{Experimental Settings for Out-of-Distribution Generalization}

To evaluate the generalizability of our model, we conducted experiments on unseen physical parameters using the AirfRANS dataset. We established two distinct Out-of-Distribution (OOD) settings: one for \textbf{Reynolds numbers} and one for \textbf{Angles of Attack}. In both cases, the training and test sets are constructed with non-overlapping parameter ranges to ensure the model cannot rely on interpolation. Table~\ref{tab:ood_settings} details the specific ranges and sample sizes used for these experiments.

\begin{table}[h]
\centering
\caption{Settings for OOD generalization experiments on AirfRANS. The training and test sets contain disjoint ranges for Reynolds numbers and angles of attack.}
\label{tab:ood_settings}
\begin{small}
\begin{sc}
\resizebox{\textwidth}{!}{
\begin{tabular}{lccccc}
\toprule
& \multicolumn{2}{c}{\textbf{OOD Reynolds}} & \multicolumn{2}{c}{\textbf{OOD Angles}} \\
\cmidrule(lr){2-3} \cmidrule(lr){4-5}
\textbf{Dataset} & \textbf{Range} & \textbf{Samples} & \textbf{Range} & \textbf{Samples} \\
\midrule
Training Set & $[3 \times 10^6, 5 \times 10^6]$ & 500 & $[-2.5^\circ, 12.5^\circ]$ & 800 \\
Test Set     & $[2 \times 10^6, 3 \times 10^6] \cup [5 \times 10^6, 6 \times 10^6]$ & 500 & $[-5^\circ, -2.5^\circ] \cup [12.5^\circ, 15^\circ]$ & 200 \\
\bottomrule
\end{tabular}
}
\end{sc}
\end{small}
\end{table}

\section{Additional Experiments}

\subsection{Ablation on Precision Matrix Structure}

Finally, we investigate the structural parameterization of the precision matrix $\mathbf{P}_g$. While our standard model employs a diagonal precision matrix, theoretical expressivity can be increased by adding low-rank updates (e.g., $\mathbf{P}_g = \mathbf{D}_g + \mathbf{u}\mathbf{u}^\top$) to explicitly model off-diagonal correlations in the latent space. Table \ref{tab:ablation_diagonal} summarizes the results on the Airfoil and Darcy datasets.

\textbf{Trade-off: Expressivity vs. Optimization.}
Contrary to the expectation that higher theoretical expressivity yields better performance, we observe a degradation when introducing off-diagonal terms: the standard Diagonal model ($0.0045$) outperforms both Rank-1 ($0.0047$) and Rank-2 ($0.0048$) updates. We attribute this to two factors derived from our pullback analysis:

\begin{itemize}
    % \item \textbf{Sufficiency of Diagonal Latent Metrics:} As established in Proposition \ref{prop:orientation of pullback}, the effective metric on the physical domain is given by the pullback $J_\phi^\top \mathbf{P}_g J_\phi$. Since the Jacobian $J_\phi$ is generally dense, it naturally mixes coordinates, allowing the receptive field to rotate and shear in physical space even if $\mathbf{P}_g$ is diagonal. In essence, the non-linear encoder $\phi$ learns to orient the latent coordinate system such that the principal axes of the physical features align with the canonical latent axes. Consequently, explicitly modeling off-diagonal terms in $\mathbf{P}_g$ is redundant, as the encoder is already capable of ``diagonalizing'' the local geometry.
    \item \textbf{Effectiveness of Diagonal Latent Metrics:}
    Proposition~\ref{prop:orientation of pullback} shows that the local pullback geometry in physical coordinates is governed by \(J_\phi^\top P_g J_\phi\) near a slice center. Thus, even when \(P_g\) is diagonal in latent coordinates, composition with the encoder can induce oriented receptive fields in physical coordinates. This suggests that diagonal latent precision matrices can already provide useful anisotropic geometry, while explicit off-diagonal updates may offer less practical benefit on these benchmarks than their nominal expressivity suggests.

    \item \textbf{Optimization Complexity:} Introducing low-rank updates creates parameter redundancy. Since rotations can be represented either by modifying the encoder weights (changing $J_\phi$) or by the precision matrix update ($\mathbf{u}\mathbf{u}^\top$), the model faces a non-unique solution space. This likely creates a more complex optimization landscape with additional saddle points, hindering convergence compared to the strictly constrained diagonal formulation.
\end{itemize}
\begin{table}[H]
\centering
\caption{Ablation study on the precision matrix structure. We compare our standard Diagonal formulation against adding low-rank non-diagonal updates on the \textbf{Airfoil} and \textbf{Darcy} datasets. The results show that increasing rank complexity yields no performance gain (and even degradation on Airfoil), suggesting that axis-aligned scaling is sufficient and easier to optimize for these standard benchmarks.}
\label{tab:ablation_diagonal}
\begin{center}
\begin{small}
\begin{sc}
\resizebox{0.5\columnwidth}{!}{%
\begin{tabular}{l|cc}
\toprule
\textbf{Precision Structure} & \textbf{Airfoil} & \textbf{Darcy} \\ \midrule
\textbf{Diagonal (Standard)} & \textbf{0.0045} & \textbf{0.0045} \\
Diagonal + Rank-1 Update & 0.0047 & \textbf{0.0045} \\
Diagonal + Rank-2 Update & 0.0048 & 0.0046 \\ \bottomrule
\end{tabular}%
}
\end{sc}
\end{small}
\end{center}
\end{table}

\subsection{Scalability and Robustness Analysis}
\label{sec:scalability_analysis}

To evaluate the efficiency and stability of the proposed M-Transolver, we conduct a series of scalability experiments. We analyze the model's sensitivity to three critical hyper-parameters: the resolution of the physical slicing ($G$), the size of the training dataset ($N$), and the network depth ($L$).

\subsubsection{Impact of Slice Resolution ($G$)}
We first investigate the impact of the number of slices $G$, which governs the granularity of the geometric prior. Table \ref{tab:ablation_slices} compares the Relative $L_2$ Error (L2RE) of the baseline Transolver against our M-Transolver on the Elasticity and Darcy benchmarks.

\begin{table}[H]
\centering
\caption{Ablation study on the scalability with respect to the number of slices $G$. We compare the performance (Relative $L_2$) of the baseline Transolver and our M-Transolver on the Elasticity (Point Cloud) and Darcy (Regular Grid) datasets.}
\label{tab:ablation_slices}
\begin{sc}
\resizebox{0.75\columnwidth}{!}{%
\begin{tabular}{cc|cc|cc}
\toprule
\multicolumn{2}{c|}{\textbf{Ablations}} & \multicolumn{2}{c|}{\textbf{Transolver L2RE}} & \multicolumn{2}{c}{\textbf{M-Transolver L2RE}} \\ 
\multicolumn{2}{c|}{\textbf{Number of Slices}} & \textbf{Elasticity} & \textbf{Darcy} & \textbf{Elasticity} & \textbf{Darcy} \\ \midrule
\multirow{8}{*}{\rotatebox{90}{\textbf{Number of Slices}}} 
& 1 & 0.0148 & 0.0386 & 0.0221 & 0.0379 \\
& 8 & 0.0071 & 0.0096 & 0.0052 & 0.0054 \\
& 16 & 0.0067 & 0.0067 & 0.0048 & 0.0048 \\
& 32 & 0.0067 & 0.0063 & 0.0049 & 0.0047 \\
& 64 & 0.0064 & 0.0059 & 0.0044 & 0.0045 \\
& 96 & 0.0061 & 0.0055 & 0.0044 & 0.0045 \\
& 128 & 0.0058 & 0.0054 & 0.0043 & 0.0043 \\
& 256 & 0.0054 & 0.0050 & 0.0044 & 0.0043 \\
& 512 & 0.0059 & 0.0056 & 0.0041 & 0.0044 \\ \bottomrule
\end{tabular}%
}
\end{sc}
\end{table}

As shown in Table \ref{tab:ablation_slices}, setting $G=1$ (global attention without slicing) results in high errors for both models, confirming the necessity of the slice-based attention mechanism for physical simulations. As $G$ increases, the error drops significantly. Notably, \textit{M-Transolver consistently outperforms the baseline across all resolutions}, stabilizing at a lower error floor ($\approx 0.43\%$) compared to the baseline ($\approx 0.54\%$) on Elasticity. Furthermore, M-Transolver achieves optimal performance with fewer slices (around $G=64$), suggesting that the metric-aware mechanism captures critical geometry more efficiently than the isotropic baseline.

\subsubsection{Scalability with Training Samples}
Data efficiency is a crucial metric for neural operators, as generating high-fidelity simulation data is computationally expensive. We evaluate model performance across training sizes ranging from 200 to 1000 samples.

\begin{figure}[h]
    \centering
    \includegraphics[width=\textwidth]{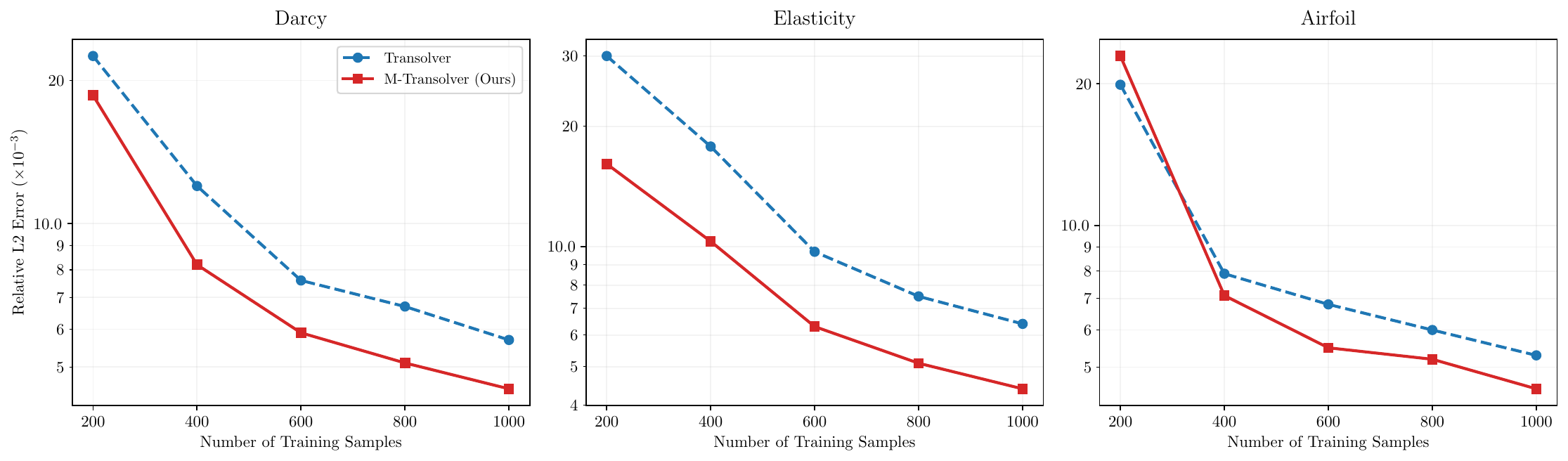}
    \caption{\textbf{Data Efficiency Analysis.} Convergence of Relative $L_2$ Error as the number of training samples increases. M-Transolver (red) exhibits superior data efficiency, achieving significantly lower error rates in the low-data regime (200-400 samples) across all three benchmarks.}
    \label{fig:scalability_samples}
\end{figure}

The results, visualized in Figure \ref{fig:scalability_samples}, demonstrate that M-Transolver exhibits a steeper convergence rate. In the data-scarce regime (200 samples), our method reduces the error by approximately $17\%$ on Darcy ($0.0186$ vs $0.0225$) and $46\%$ on Elasticity ($0.0161$ vs $0.0300$). This indicates that the learned anisotropic geometric prior acts as an effective regularizer, allowing the model to generalize well even with limited supervision.

\subsubsection{Scalability with Network Depth}
Finally, we examine the scalability of the architecture with respect to network depth (number of layers), varying from 8 to 40 layers. This tests the model's ability to propagate physical information without suffering from degradation or over-smoothing.

\begin{figure}[h]
    \centering
    \includegraphics[width=\textwidth]{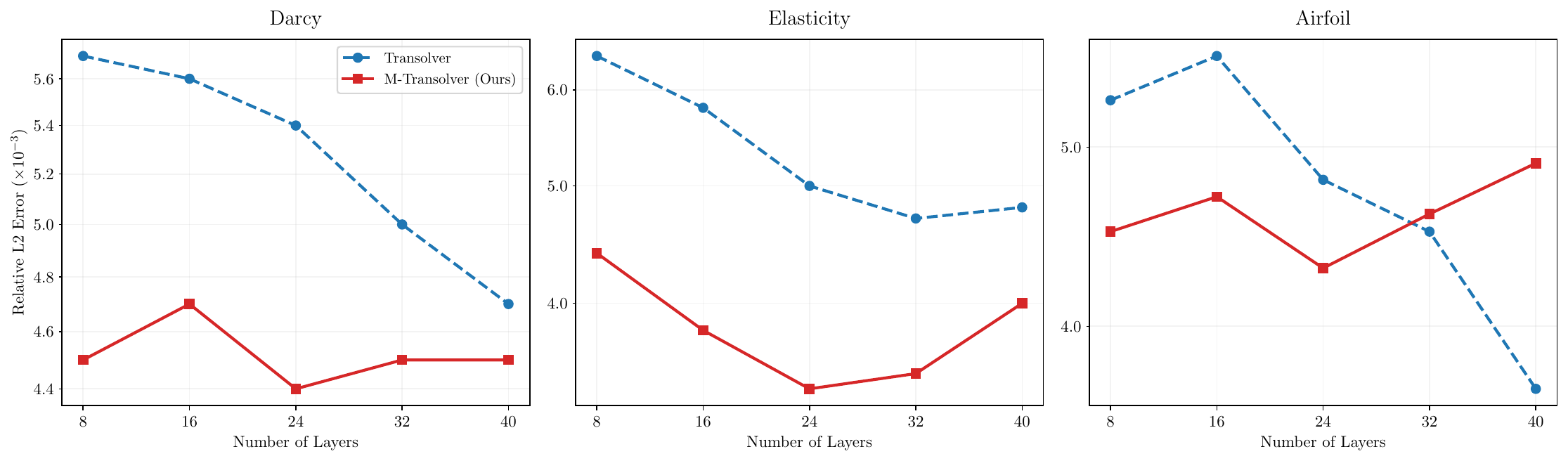}
    \caption{\textbf{Depth Efficiency Analysis.} Performance comparison across varying network depths (8 to 40 layers). M-Transolver demonstrates high stability and robustness, maintaining low error rates even with shallower networks, whereas the baseline often requires greater depth to match performance.}
    \label{fig:scalability_layers}
\end{figure}

As illustrated in Figure \ref{fig:scalability_layers}, M-Transolver is remarkably depth-efficient. On the Elasticity benchmark, our 8-layer model achieves an error of $0.0044$, which outperforms the 40-layer baseline model ($0.0048$). While the baseline Transolver exhibits performance fluctuations at intermediate depths, M-Transolver maintains a stable and consistent error profile. This stability implies that the anisotropic attention mechanism effectively disentangles physical features early in the network, reducing the need for excessive depth to resolve complex flow features.

\subsection{Eccentricity and Alignment of Pullback Metrics}
\label{sec:alignment}
To quantitatively analyze the geometric structure of the learned metrics, we examine the shape and orientation of the attention mechanisms induced by M-Transolver. We define the \textbf{eccentricity} ($e$) of an attention slice based on the eigenvalues ($\lambda_1 \ge \lambda_2$) of its spatial covariance matrix:
\begin{equation}
    e = \sqrt{1 - \frac{\lambda_2}{\lambda_1}}.
\end{equation}
An eccentricity close to $0$ indicates an isotropic (spherical) receptive field, while a value close to $1$ implies a highly anisotropic (needle-like) structure. We further measure the \textit{alignment} as the absolute cosine similarity between the principal axis of the ellipsoid and the local flow velocity vector, where $1.0$ indicates perfect alignment with the streamline.

As shown in Table~\ref{tab:ecc_align}, M-Transolver learns highly anisotropic structures, with the top $10\%$ of slices exhibiting a mean eccentricity of $0.933$. Crucially, these anisotropic slices are not random; they show a high degree of alignment ($0.80$) with the physical flow direction. This suggests that the model naturally discovers the underlying physics, learning to aggregate information along streamlines (where advection dominates) rather than isotropically.

\begin{table}[h]
\caption{Geometric analysis of attention slices in the Navier-Stokes dataset. M-Transolver learns highly anisotropic receptive fields that are strongly aligned with the fluid flow velocity.}
\label{tab:ecc_align}
\begin{center}
\begin{small}
\begin{sc}
\begin{tabular}{lccc}
\toprule
Tier (by Eccentricity) & Mean Ecc. & Mean Alignment & Count \\
\midrule
Top 10\% Most Aniso. & 0.9330 & 0.8000 & 636 \\
Top 25\% Most Aniso. & 0.8952 & 0.7567 & 1591 \\
Top 50\% Most Aniso. & 0.8454 & 0.7265 & 3182 \\
Bottom 10\% (Least)  & 0.4347 & 0.6613 & 636 \\
\bottomrule
\end{tabular}
\end{sc}
\end{small}
\end{center}
\end{table}

\begin{figure}[H] % Use figure* if you need it to span two columns in a two-column paper
    \centering
    \vspace{1em}
    \begin{subfigure}[b]{0.24\linewidth}
        \centering
        \includegraphics[width=\linewidth]{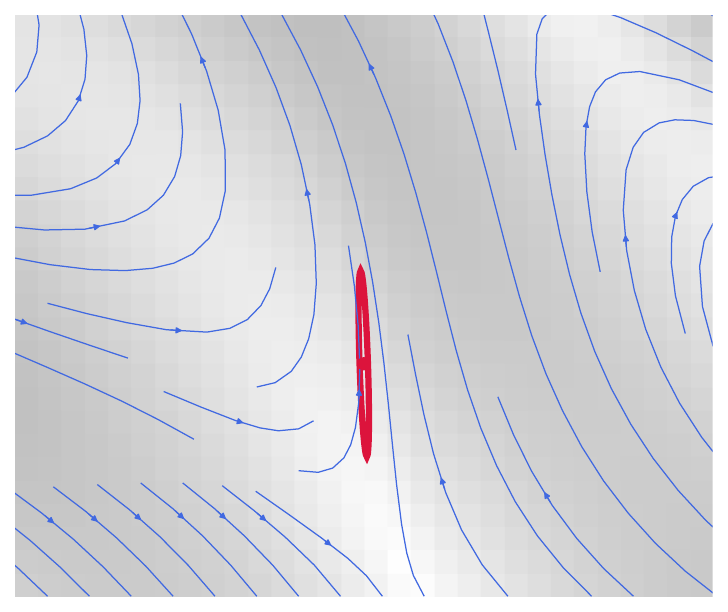}
    \end{subfigure}
    \hfill
    \begin{subfigure}[b]{0.24\linewidth}
        \centering
        \includegraphics[width=\linewidth]{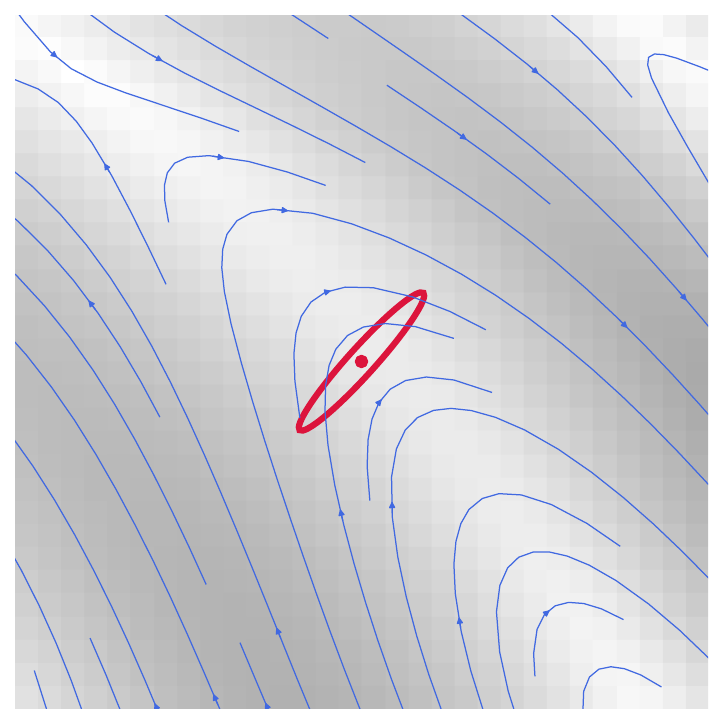}
    \end{subfigure}
    \hfill
    \begin{subfigure}[b]{0.24\linewidth}
        \centering
        \includegraphics[width=\linewidth]{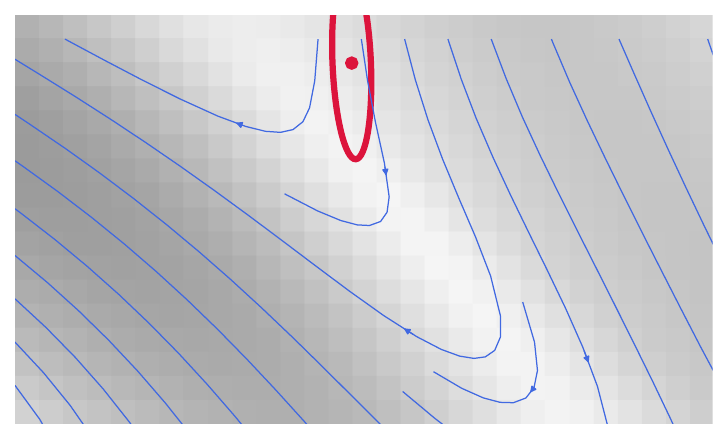}
    \end{subfigure}
    \hfill
    \begin{subfigure}[b]{0.24\linewidth}
        \centering
        \includegraphics[width=\linewidth]{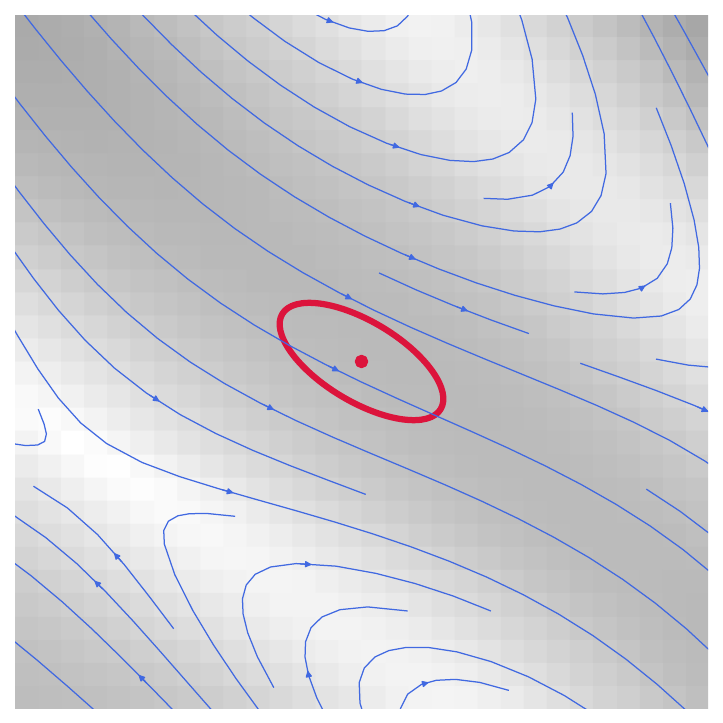}
    \end{subfigure}
    
    \caption{Showcases of the learned pullback metric alignment of M-Transolver. The visualizations demonstrate how the model learns anisotropic receptive fields that align with the physical flow structures.}
    \label{fig:attention_comparison}
\end{figure}

\subsection{Standard Deviations and Robustness}

To ensure statistical rigor, we repeated all experiments across four independent runs and report the mean and standard deviations in Table~\ref{tab:std_dev}. Our proposed method demonstrates robust performance, surpassing the previous state-of-the-art on unstructured domains (Point Cloud and Unstructured Mesh benchmarks) with high confidence. Notably, in complex geometry tasks like ShapeNet-Car and AirfRANS, our method consistently achieves superior correlation coefficients with clear statistical separation from the best baselines.

\begin{table}[H]
\caption{Standard deviations of our model compared to the best previous baselines across all experiments. We report the mean $\pm$ standard deviation over 4 independent runs. Spearman's rank correlations for AirfRANS and ShapeNet-Car are presented as percentages (scaled by 100). All other values are relative errors ($\times 10^{-2}$). The best baseline model for each task is indicated in parentheses.}
\label{tab:std_dev}
\begin{center}
\begin{small}
\begin{sc}
\resizebox{\textwidth}{!}{
\begin{tabular}{lcccccccc}
\toprule
& \multicolumn{3}{c}{Point Cloud} & \multicolumn{2}{c}{Structured Mesh} & Reg. Grid & \multicolumn{2}{c}{Unstructured Mesh} \\
\cmidrule(lr){2-4} \cmidrule(lr){5-6} \cmidrule(lr){7-7} \cmidrule(lr){8-9}
Model & Elasticity & Plasticity & Airfoil & Pipe & Navier-Stokes & Darcy & ShapeNet-Car & AirfRANS \\
\midrule
Best Baseline & $0.50{\scriptstyle\pm 0.021}$ & $0.07{\scriptstyle\pm 0.008}$ & $0.41{\scriptstyle\pm 0.013}$ & $0.27{\scriptstyle\pm 0.026}$ & $7.19{\scriptstyle\pm 0.072}$ & $0.49{\scriptstyle\pm 0.025}$ & $99.15{\scriptstyle\pm 0.01}$ & $99.83{\scriptstyle\pm 0.010}$ \\
& (LaMO) & (LaMO) & (LaMO) & (Transolver++) & (Transolver++) & (Transolver++) & (Transolver) & (LaMO) \\
\midrule
\textbf{Ours} & $\mathbf{0.44}{\scriptstyle\pm 0.015}$ & $\mathbf{0.03}{\scriptstyle\pm 0.006}$ & $\mathbf{0.37}{\scriptstyle\pm 0.017}$ & $\mathbf{0.23}{\scriptstyle\pm 0.028}$ & $\mathbf{6.78}{\scriptstyle\pm 0.17}$ & $\mathbf{0.45}{\scriptstyle\pm 0.01}$ & $\mathbf{99.38}{\scriptstyle\pm 0.025}$ & $\mathbf{99.94}{\scriptstyle\pm 0.010}$ \\
& (M-Transolver) & (M-LaMO) & (M-LaMO) & (M-Transolver) & (M-Transolver) & (M-Transolver) & (M-Transolver) & (M-LaMO) \\
\bottomrule
\end{tabular}
}
\end{sc}
\end{small}
\end{center}
\end{table}

\section{Limitations and Future Work.} While effective, METRO presents limitations that motivate future work. First, the diagonal parameterization of the precision matrix restricts expressivity in complex 3D turbulent flows requiring full off-diagonal correlations. Stabilizing the training of full covariance matrices (e.g., via Riemannian gradient descent) remains an open challenge. Second, METRO learns static spatial metrics $\mathbf{P}_g$, which may struggle to track transient structures in highly unsteady flows, highlighting a need for time-dependent matrices $\mathbf{P}_g(t)$. Finally, METRO operates at a single spatial scale. Integrating it as a boundary-aware frontend for $O(N)$ architectures like LinearNO \citep{hu2025transolver}, or into hierarchical pooling operations like GAOT \citep{wen2025geometry}, is necessary to achieve true multi-scale geometric adaptivity.

\clearpage

\end{document}